\documentclass{article}

\usepackage{PRIMEarxiv}
\usepackage{amsthm}

\usepackage{natbib}

\usepackage{graphicx,amsmath,amsfonts,amssymb,bm,hyperref,url,epsfig,epsf,color,fullpage,MnSymbol,mathbbol, fmtcount,algorithmic,algorithm, semtrans
} 
\numberwithin{equation}{section}

\usepackage{titlesec}
\usepackage[]{mdframed}

\usepackage{tikz}
\usepackage{pgfplots}

\usetikzlibrary{pgfplots.groupplots}
\usepackage{amsmath,amssymb}

\setcounter{secnumdepth}{4}

\titleformat{\paragraph}
{\normalfont\normalsize\bfseries}{\theparagraph}{1em}{}
\titlespacing*{\paragraph}
{0pt}{3.25ex plus 1ex minus .2ex}{1.5ex plus .2ex}

\usepackage{caption,subcaption}

\usepackage[bottom,hang,flushmargin]{footmisc} 
\usepackage[font=small]{caption}
\DeclareMathOperator*{\argmax}{arg\,max}

\usepackage{hyperref}
\definecolor{darkred}{RGB}{150,0,0}
\definecolor{darkgreen}{RGB}{0,150,0}
\definecolor{darkblue}{RGB}{0,0,200}
\hypersetup{colorlinks=true, linkcolor=darkred, citecolor=darkgreen, urlcolor=black}

\newtheorem{theorem}{Theorem}[section]
\newtheorem{lemma}[theorem]{Lemma}

\newtheorem{proposition}[theorem]{Proposition}
\newtheorem{definition}[theorem]{Definition}

\newtheorem{remark}{Remark}

\newtheorem{example}{Example}


\DeclareMathOperator*{\argmin}{arg\,min}





\usepackage[utf8]{inputenc} 
\usepackage[T1]{fontenc}    
\usepackage{hyperref}       
\usepackage{url}            
\usepackage{booktabs}       
\usepackage{amsfonts}       
\usepackage{nicefrac}       
\usepackage{microtype}      
\usepackage{lipsum}
\usepackage{fancyhdr}       
\usepackage{graphicx}       
\graphicspath{{media/}}     

\pagestyle{fancy}
\thispagestyle{empty}
\rhead{ \textit{ }} 



\title{Tight Rates in Supervised Outlier Transfer Learning}

\author{
  Mohammadreza M. Kalan\\
  Statistics, Columbia University \\
  \texttt{mm6244@columbia.edu} \\
   \And
  Samory Kpotufe \\
  Statistics, Columbia University\\
  \texttt{samory@columbia.edu} \\
}

\begin{document}
\maketitle

\begin{abstract}
A critical barrier to learning an accurate decision rule for outlier detection is the scarcity of outlier data. As such, practitioners often turn to the use of similar but imperfect outlier data from which they might \emph{transfer} information to the target outlier detection task. Despite the recent empirical success of transfer learning approaches in outlier detection, a fundamental understanding of when and how knowledge can be transferred from a source to a target outlier detection task remains elusive. In this work, we adopt the traditional framework of Neyman-Pearson classification---which formalizes \emph{supervised outlier detection}---with the added assumption that one has access to some related but imperfect outlier data. Our main results are as follows: 
\begin{itemize} 
\item We first determine the information-theoretic limits of the problem under a measure of discrepancy that extends some existing notions from traditional balanced classification; interestingly, unlike in balanced classification, seemingly very dissimilar sources can provide much information about a target, thus resulting in fast transfer.

\item We then show that, in principle, these information-theoretic limits are achievable by \emph{adaptive} procedures, i.e., procedures with no a priori information on the discrepancy between source and target outlier distributions. 
\end{itemize}
\end{abstract}

\section{Introduction}

A primary objective in many data science applications is to learn a decision rule that separates a common class with abundant data from a \emph{rare} class with limited or no data. This is a traditional problem which often appears under the umbrella term of \emph{outlier detection} or \emph{rare class classification}, and has seen a resurgence of interest in modern applications such as 
malware detection in cybersecurity and IoT \citep{jose2018survey, kumar2019edima}, fraud detection in credit card transactions \citep{malini2017analysis}, disease diagnosis \citep{bourzac2014diagnosis,zheng2011machine}, among others. A main goal in these applications---which distinguishes it from traditional classification where performance is asssessed on \emph{average} over all classes---is to achieve low classification error on the rare class, while at the same time maintaining low error w.r.t. the common class. Such a constrained objective is commonly referred to as Neyman-Pearson classification. Formally, letting $\mu_0, \mu_1$ denote the common and rare class distributions, Neyman-Pearson classification takes the form: 
\begin{align*}
    &\text{Minimize}\ \mu_1\text{-error}\ \text{over classifiers } {h}\  \text{in some hypothesis space }\ \cal H \\
    & \text{subject to keeping the }\  \mu_0\text{-error of such an }\ {h}\  \text{ under a threshold}\ \alpha. 
\end{align*}

In this work, we focus on the common supervised setting where practitioners have access to not only training data from the common class, but also some (limited amount of) data from the rare class or, pertinently, \emph{from a related distribution they hope has information on the rare class}. Henceforth, for simplicity of notation, we denote the \emph{target} rare class distribution by $\mu_{1, T}$ and the related but imperfect rare class distribution by $\mu_{1, S}$, where "S" stands for \emph{source}. 
As an example, such related rare-class data may be from a different infected device in IoT applications, or from similar cancer types in medical applications, or from laboratory simulations of a rare class. This is thus a \emph{transfer learning} problem, however for \emph{supervised outlier detection} rather than for traditional classification as is usual in the literature. 

While this \emph{outlier transfer} problem is quite common in applications due to the scarcity of rare class data \citep{su2023transfer,wen2019time,aburakhia2020transfer}, the problem has so far received little rigorous attention. Our main aim is therefore to further theoretical understanding of this timely problem, and in particular to gain much needed insight on the extent to which related but imperfect rare class data may improve performance for the \emph{target} Neyman-Pearson classification problem. Such achievable transfer performance of course must depend on \emph{how far} the related rare class distribution $\mu_{1, S}$ is from the target $\mu_{1, T}$---somehow properly formalized---and or whether the related rare-class distribution $\mu_{1, S}$ induces similar optimal decision rules as for the target rare class distribution $\mu_{1, T}$. 

We first argue that, unlike in traditional classification, seemingly very different source and target distributions $\mu_{1, S}, \mu_{1, T}$ may induce the same exact (universally) optimal decision rules in Neyman-Pearson classification; this is obtained as a consequence of a simple extension of the classical Neyman-Pearson Lemma \citep{lehmann1986testing} to the case of transfer (see Proposition \ref{lem6}). This is illustrated in Fig \ref{fig1} and 
explained in detail in Section \ref{full_trans}. As a consequence, unlike in usual classification, we can approximate the universally best classifier $h^*_{T,\alpha}$ under the target  arbitrarily well asymptotically, i.e., with sufficiently large data from a seemingly unrelated source.

However, the story turns out more nuanced in finite-sample regimes, i.e, as we consider the rate of such approximation (in terms of relevant sample sizes), even when $\mu_{1, S}$ and $\mu_{1, T}$ admits the same optimal classifiers. That is, two different sources $\mu_{1, S}$ and $\mu_{1, S}'$ may yield considerably different transfer rates in finite-sample regimes even if both of them share the same optimal classifiers as the target $\mu_{1, T}$: this is because a given source may yield more data near the common decision boundary $h^*_{T,\alpha}$ than another source, thus revealing $h^*_{T,\alpha}$ at a faster rate. In particular, we show in our first main result of Theorem \ref{minimax_rate}---a minimax lower bound---that the rate of convergence of \emph{any outlier-transfer approach} is in fact controlled by a relatively simple notion of \emph{outlier transfer exponent} (adapted from transfer results in traditional classification) which essentially measures how well a source may reveal the unknown decision boundary. Theorem \ref{minimax_rate} is in fact rather general: the minimax lower bound holds for \emph{any hypothesis space \emph{$\cal H$}} of finite VC dimension (at least 3), even in cases where no samples from the rare target class $\mu_{1,T}$ are available. Moreover, the result holds generally $h^*_{S,\alpha}$ and $h^*_{T,\alpha}$ are the same or not. 

We finally turn our attention to whether such rates may be achieved \emph{adaptively}, i.e., from samples alone without prior knowledge of the discrepancy between $\mu_{1, S}$ and $\mu_{1, T}$ as captured by both the transfer exponent and the \emph{amount} of difference between optimal classifiers $h^*_{S,\alpha}$ and $h^*_{T,\alpha}$. We show in Theorem \ref{upper_adaptive} that this is indeed the case: the minimax lower bounds of Theorem \ref{minimax_rate} can be matched up to logarithmic factors by some careful adaptation approach that essentially compares the performance of the empirical best source and target classifiers on the target data. This is described in Section \ref{Adaptive_Sec}.

\begin{figure}[t]
  \centering
\includegraphics[height=0.18\textwidth,width=0.67\textwidth]{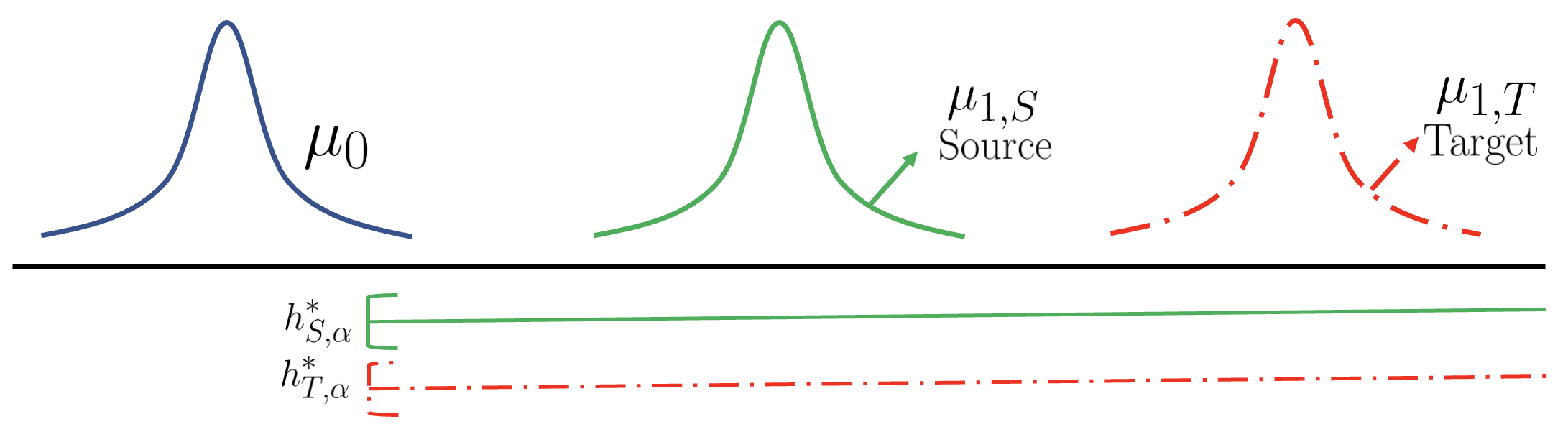}
  \caption{$\mu_0=\mathcal{N}(a_0,\sigma^2)$ is the common distribution, and $\mu_{1,S},\mu_{1,T}=\mathcal{N}(a_{1,S},\sigma^2$), $\mathcal{N}(a_{1,T},\sigma^2)$ are the source and target distributions. The universally optimal decision rules for the source and target are identical, i.e.,  $h^*_{S,\alpha}=h^*_{T,\alpha}$, in fact for any value of $\alpha \in [0, 1]$ (see Section \ref{full_trans}).} 
  \label{fig1}
\end{figure}

\subsection{Related Work}\label{related}
Outlier detection and transfer learning have mostly been studied separately, despite the clear need for transfer learning in applications of outlier detection where the rare class of data is by definition, always scarce. 

As such, transfer learning works have mostly focused on traditional classification and regression starting from seminal works of \cite{mansour2009domain,david2010impossibility,ben2010theory,ben2006analysis}, to more recent works of \cite{hanneke2019value,hanneke2022no,kalan2022statistical,mousavi2020minimax,lei2021near}. The works of \cite{hanneke2019value,hanneke2022no} are most closely related as our notion of outlier transfer exponent may be viewed as an extention of their notion of transfer exponent; however, besides for the fact that both notions capture discrepancies around decision boundaries, transfer in outlier detection is fundamentally different from the case of traditional classification studied in the above works: for instance, as stated earlier, distributions that are significantly different in traditional classification can be quite close in outlier transfer as revealed in this work.

Theoretical works on outlier detection on the other hand have mostly focused on unsupervised and supervised settings, but without considering the more practical transfer setting. Unsupervised outlier detection assumes that only data from the common class $\mu_0$ is available; theoretical works include studies of density-level set estimation \citep{steinwart2005classification, polonik1995measuring,ben1997learning,tsybakov1997nonparametric} where \emph{outliers} are viewed as data in low density regions, or in works on so-called \emph{one-class classification} that aim to learn a contour of the common class $\mu_0$ \citep{scholkopf2001estimating}. Supervised outlier-detection has commonly been formalized via Neyman-Pearson classification, where some data from both the common and rare classes are used to optimize and constrain empirical errors. Early works include \citep{cannon2002learning, scott2005neyman, blanchard2010semi, rigollet2011neyman} which establish convergence rates in various distributional and model selection settings, but all exclude the question of transfer. 

Transfer learning for outlier detection has in fact received much recent attention in the methodological literature \citep{xiao2015robust, andrews2016transfer, ide2017multi, chalapathy2018anomaly, yang2020anomaly} where various approaches have been proposed that aim to leverage shared structural aspects of source and target rare class data. 

On the theoretical front however, much less is understood about outlier transfer. The recent work of \cite{scott2019generalized} initiates theoretical understanding of the problem: they are first to show that, in some situations where both source and target share the same optimal classifiers, various procedures can guarantee consistency (i.e., taking sample size to infinity) even as source and target $\mu_{1, S}, \mu_{1, T}$ appear different. Our Proposition \ref{lem6} shows that in fact optimal classifiers may be shared in even more general situations, similarly implying consistency for seemingly very different source and target rare class distributions. Our main results of Theorems \ref{minimax_rate} and \ref{upper_adaptive}
reach further in deriving the first insights into the finite-sample regimes of outlier transfer, by establishing information-theoretic limits of the problem, along with a notion of discrepancy 
between source and target that tightly captures such limits.

\section{Setup}
\label{setup-paper}
We first formalize the Neyman-Pearson classification framework, followed by its extension to the transfer case. 
\subsection{Neyman-pearson Classification}
Let $\mu_0$ and $\mu_1$ denote probability distributions on some measurable space ($\mathcal{X},\Sigma)$. Furthermore, suppose that $\mathcal{H}$ is a hypothesis class consisting of measurable $0$-$1$ functions on the domain $\mathcal{X}$, where we view $h(x) =0$ or $1$ as predicting that $x$ is generated from class $\mu_0$ or $\mu_1$. We view $\mu_0$ and $\mu_1$ as representing a \emph{common} and \emph{rare} class of (future) data. 

\begin{definition} We are interested in the so-called \emph{Type-I} and \emph{Type-II} errors defined as follows: 
\begin{align*} 
R_{\mu_0}(h)=\mu_0(h(x)=1), \quad R_{\mu_1}(h)=\mu_1(h(x)=0). 
\end{align*} 

\end{definition}

\emph{Neyman-Pearson} classification then refers to the problem of minimizing Type-II error subject to low Type-I error: 
\begin{align}\label{eq0}
    &\underset{h\in \mathcal{H}}{\text{Minimize}} \ R_{\mu_1}(h)\nonumber\\
    & \text{s.t.} \ R_{\mu_0}(h)\leq \alpha
\end{align}

Under mild conditions, the \emph{universally} optimal classifier, i.e., taking $\cal H$ as the set of all measurable $0$-$1$ functions, is fully characterized by the classical Neyman-Pearson Lemma (see Appendix \ref{APP_A}) in terms of \emph{density ratios}. Namely, let $p_0$ and $p_1$ denote densities of $\mu_0$ and $\mu_1$ w.r.t. some dominating measure $\nu$, then the minimizer of (\ref{eq0}) has the form $h^*_\alpha(x) = \mathbb{1}_{\{ \frac{p_1(x)}{p_0(x)} \geq \lambda\}}$ whenever there exists $\lambda$ such that $R_{\mu_0} (h_{\alpha}^*)$ is exactly $\alpha$\footnote{If we further allow for randomized classifiers, then Neyman Pearson Lemma fully characterizes universally optimal solutions of (\ref{eq0}) and establishes uniqueness almost-surely under mild restrictions.}. 

In Section \ref{full_trans} we ask when such universal minimizer transfers across source and target rare class distributions.
\subsection{Transfer Learning Setup}
{\bf Population Setup.} We consider the following two source and target Neyman-Pearson problems, defined for a fixed common class distribution $\mu_0$, and source and target rare class distributions $\mu_{1,S}$ and $\mu_{1,T}$:

\begin{minipage}{.35\linewidth}
  \centering
  \begin{align}\label{eq1}
    &\underset{h\in \mathcal{H}}{\text{Minimize}} \ R_{\mu_{1,S}}(h)\nonumber\\
    & \text{s.t.} \ R_{\mu_0}(h)\leq \alpha
  \end{align}
\end{minipage}%
\begin{minipage}{.2\linewidth}
\centering \
\end{minipage}%
\begin{minipage}{.35\linewidth}
  \centering
  \begin{align}\label{eq2}
    &\underset{h\in \mathcal{H}}{\text{Minimize}} \ R_{\mu_{1,T}}(h)\nonumber\\
    & \text{s.t.} \ R_{\mu_0}(h)\leq \alpha
  \end{align}
\end{minipage}

We let $(\mu_0,\mu_{1,S}, \alpha)$ and $(\mu_0,\mu_{1,T}, \alpha)$ denote these source and target problems. We will see later that the limits of outlier transfer, especially in finite-sample regimes, are well captured by discrepancies between these two problems. In particular, we will be interested in discrepancies between optimal solutions and the measure of the corresponding decision boundaries under involved distributions. We henceforth let $h^*_{S, \alpha}$ and $h^*_{T, \alpha}$ denote (not necessarily unique) {\bf solutions} of (\ref{eq1}) and (\ref{eq2}) (which we assume exist).

{\bf Finite-Sample Setup.} We assume access to $n_0, n_S, n_T$ i.i.d. data points respectively from $\mu_0, \mu_{1, S}, \mu_{1, T}$, where we allow $n_T = 0$. The transfer-learning procedure is then allowed to return $\hat h \in \cal H$ satisfying 
$$R_{\mu_0}(\hat h) \leq \alpha + \epsilon_0, $$
for some slack $\epsilon_0 = \epsilon_0(n_0)$, usually of order $n_0^{-1/2}$. The goal of the learner is to minimize the {\bf target-excess error}

$$\mathcal{E}_{1,T}(\hat{h})\doteq\max\big\{0,R_{\mu_{1,T}}(\hat{h})-R_{\mu_{1,T}}(h^*_{T,\alpha})\big\}.$$

A main aim of this work is to understand which rates of $\mathcal{E}_{1,T}(\hat{h})$ are achievable in terms of sample sizes $n_S$ and $n_T$, and which notions of discrepancy from source to target help capture these limits.


\section{Equivalence of Population Problems}\label{full_trans}
As discussed in the introduction, we may have seemingly very different source and target distributions $\mu_{1,S}$ and $\mu_{1, T}$ which however yield the same (universally) optimal classifiers. We investigate this phenomena in this section, the main aim being to illustrate how fundamentally different outlier transfer is from traditional classification. To this end, we consider the set $\cal U$ of all possible measurable $0$-$1$ functions on $\cal X$, and let $\cal H = \cal U$ in the dicussion below. 

We will henceforth say that the source problem $(\mu_0, \mu_{1, S}, \alpha)$ is {\bf equivalent} to the target problem $(\mu_0, \mu_{1, T}, \alpha)$ (at the population level), if all solutions to (\ref{eq1}) are also solutions to (\ref{eq2}). Clearly, when this is the case, source samples alone can drive down the target risk at least asymptotically. 

We first notice that Neyman-Pearson Lemma offers some immediate answers under mild conditions. To see this, let $p_0, p_{1, S}, p_{1, T}$ denote densities w.r.t. a common dominating measure $\nu$. In what follows we will simply let the dominating measure $\nu \doteq  \mu_0 + \mu_{1, S} + \mu_{1, T}$. As previously discussed, Neyman-Pearson Lemma characterizes optimal classifiers in terms of level sets of density ratios. 

\begin{definition}[Level sets]\label{sub}
$\mathcal{L}_{\lambda}^{S}:=\{x: \frac{p_{1,S}(x)}{p_0(x)}\geq \lambda\}$ and $\mathcal{L}_{\lambda}^{T}:=\{x: \frac{p_{1,T}(x)}{p_0(x)}\geq \lambda\}$. Here, when the denominator of a fraction is zero we view it as infinity no matter if the nominator is zero or nonzero.
\end{definition}

The following then establishes \emph{equivalence} between source and target under mild conditions as a direct consequence of Neyman-Pearson Lemma. 

\begin{proposition}\label{prop:NP-Corollary}
Suppose $\mu_0, \mu_{1, S}, \mu_{1, T}$ are mutually dominating. Assume further that  
$\mu_0(\mathcal{L}_{\lambda}^{S})$ is continuous and strictly monotonic in $(0, 1)$ as a function of $\lambda$. Then, if $\{\mathcal{L}_{\lambda}^{S}\}_{\lambda\geq 0} \subset  \{\mathcal{L}_{\lambda}^{T}\}_{\lambda\geq 0}$, we have for all $0 < \alpha < 1$, that any $h^*_{S, \alpha}$ is a solution of the target problem (\ref{eq2}). 
\end{proposition}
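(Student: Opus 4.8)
The plan is to reduce the claim to two applications of the classical Neyman--Pearson Lemma (Appendix \ref{APP_A}): first to characterize \emph{all} solutions of the source problem, then to certify optimality for the target, with the containment of level-set families serving as the bridge. Throughout I would work with the common dominating measure $\nu \doteq \mu_0 + \mu_{1,S} + \mu_{1,T}$, using repeatedly that $R_{\mu_0}$ and $R_{\mu_{1,T}}$ depend on a classifier only up to $\nu$-null modifications, since $\mu_0, \mu_{1,T} \ll \nu$.

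The first step is to pin down the solutions of (\ref{eq1}) with $\mathcal{H}=\mathcal{U}$. Since $\lambda \mapsto \mu_0(\mathcal{L}_\lambda^S)$ is continuous and strictly monotone onto $(0,1)$, there is a unique $\lambda_S$ with $\mu_0(\mathcal{L}_{\lambda_S}^S)=\alpha$, and the Neyman--Pearson Lemma gives that $h_S \doteq \mathbb{1}_{\mathcal{L}_{\lambda_S}^S}$ is optimal for (\ref{eq1}) with Type-I error exactly $\alpha$. The extra ingredient I need is that continuity of $\mu_0(\mathcal{L}_\lambda^S)$ at $\lambda_S$ forces the indifference set $\{x : p_{1,S}(x)/p_0(x) = \lambda_S\}$ to be $\mu_0$-null --- by comparing $\mu_0(\mathcal{L}_{\lambda_S}^S)$ with $\lim_{\lambda \downarrow \lambda_S}\mu_0(\mathcal{L}_\lambda^S) = \mu_0(\{p_{1,S}/p_0 > \lambda_S\})$ --- hence also $\mu_{1,S}$-null and $\mu_{1,T}$-null by mutual domination. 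The converse (uniqueness) part of the Neyman--Pearson Lemma then yields that every solution $h^*_{S,\alpha}$ of (\ref{eq1}) agrees with $h_S$ off a $\nu$-null set.

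Next I would transport this to the target. By the hypothesis $\{\mathcal{L}_\lambda^S\}_{\lambda\ge 0} \subseteq \{\mathcal{L}_\lambda^T\}_{\lambda\ge 0}$, the set $\mathcal{L}_{\lambda_S}^S$ equals $\mathcal{L}_{\lambda_T}^T$ for some $\lambda_T \ge 0$, so $h^*_{S,\alpha} = \mathbb{1}_{\mathcal{L}_{\lambda_T}^T} = \mathbb{1}_{\{p_{1,T}/p_0 \ge \lambda_T\}}$ up to a $\nu$-null set --- exactly the density-ratio form the Neyman--Pearson Lemma prescribes for the target problem. Since errors are insensitive to $\nu$-null changes, $R_{\mu_0}(h^*_{S,\alpha}) = \mu_0(\mathcal{L}_{\lambda_T}^T) = \mu_0(\mathcal{L}_{\lambda_S}^S) = \alpha$, so the constraint in (\ref{eq2}) holds with equality, and $R_{\mu_{1,T}}(h^*_{S,\alpha}) = R_{\mu_{1,T}}(\mathbb{1}_{\{p_{1,T}/p_0 \ge \lambda_T\}})$. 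A final application of the Neyman--Pearson Lemma to $(\mu_0,\mu_{1,T},\alpha)$ shows that $\mathbb{1}_{\{p_{1,T}/p_0 \ge \lambda_T\}}$, having size exactly $\alpha$, is most powerful at level $\alpha$ over $\mathcal{U}$; hence $h^*_{S,\alpha}$ solves (\ref{eq2}).

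I expect the delicate point to be the first step: the statement quantifies over \emph{all} source solutions, so exhibiting the canonical level-set classifier is not enough --- the argument genuinely needs the converse half of the Neyman--Pearson Lemma together with the observation, drawn from the continuity hypothesis and mutual domination, that the indifference region carries no mass under any of the three distributions, so that all source solutions collapse to $h_S$ up to $\nu$-null sets. The rest is routine measure-theoretic bookkeeping ensuring that equalities between level \emph{sets} translate into equalities of Type-I and Type-II errors.
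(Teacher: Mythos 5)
Your proof is correct and takes essentially the route the paper intends: Proposition \ref{prop:NP-Corollary} is presented there as a direct consequence of the Neyman--Pearson Lemma, and your argument supplies exactly the omitted details --- part (i) for the level-set classifier at the unique threshold $\lambda_S$ with $\mu_0(\mathcal{L}^S_{\lambda_S})=\alpha$, part (ii) together with the observation that continuity of $\lambda\mapsto\mu_0(\mathcal{L}^S_\lambda)$ makes the indifference set $\mu_0$-null and mutual domination upgrades this to $\nu$-null, so that every source solution coincides a.s.~$\nu$ with $\mathbb{1}_{\mathcal{L}^S_{\lambda_S}}$, and the containment of level-set families to certify target optimality via part (i) again. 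The same role that continuity and mutual domination play in your collapse-of-solutions step is played in the paper's proof of the relaxed Proposition \ref{lem6} by the restriction to $\mathcal{U}^*$ and Proposition \ref{pro_ney}, so your argument is consistent with, and a clean special case of, the paper's machinery.
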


The above statement was already evident in the work of \cite{scott2019generalized} where it is assumed that source density ratios are given as a monotonic function of the target density ratios; this immediately implies $\{\mathcal{L}_{\lambda}^{S}\} \subset  \{\mathcal{L}_{\lambda}^{T}\}$. 

The statement is illustrated in the example of Fig \ref{fig1} with Gaussian distributions where the source may appear significantly different from the target (and in fact would yield different Bayes classifiers in traditional classification). To contrast, consider the example of Fig \ref{fig2} where the source problem yields different level sets than for the target problem (simply by changing the Gaussian variance), and we hence do not have equivalence. 

\begin{remark}[Issue with Disjoint Supports]
We have assumed in Proposition \ref{prop:NP-Corollary} that all 3 distributions are mutually dominating, while in practice this might rarely be the case. However, without this assumption (essentially of shared support), we may not easily have equivalence between source and target. 

For intuition, consider a region $A$ of space where $\mu_{1, S}(A) = 0$ while $\mu_{1, T}(A) > 0$. Then let $h^{*, 0}_{S, \alpha} = 0$ on $A$ and $h^{*, 1}_{S, \alpha} = 1$ both optimal under the source problem; 
clearly we may have $R_{1, T}(h^{*, 0}_{S, \alpha}) >  R_{1, T}(h^{*, 1}_{S, \alpha})$ since $\mu_{1, T}(A) > 0$. 
\end{remark} 

The rest of this section is devoted to handling this issue by restricting the attention to more \emph{reasonable} classifiers that essentially classify any point outside the support of $\mu_0$ as $1$. The right notion of support is critical in establishing our main relaxation of the above proposition, namely Proposition \ref{lem6} below. 

We require the following definitions and assumptions. 

\begin{definition}[Restricted Universal Hyposthesis Class]
We let $\mathcal{U}^*$ consist of all $0$-$1$ measurable functions on the domain $\mathcal{X}$ such that for every $h\in\mathcal{U}^*$, $h\equiv 1$ on $\{x:p_0(x)=0\}$ a.s. $\nu$.
\end{definition}

\begin{figure}[t]
  \centering
\includegraphics[height=0.18\textwidth,width=0.67\textwidth]{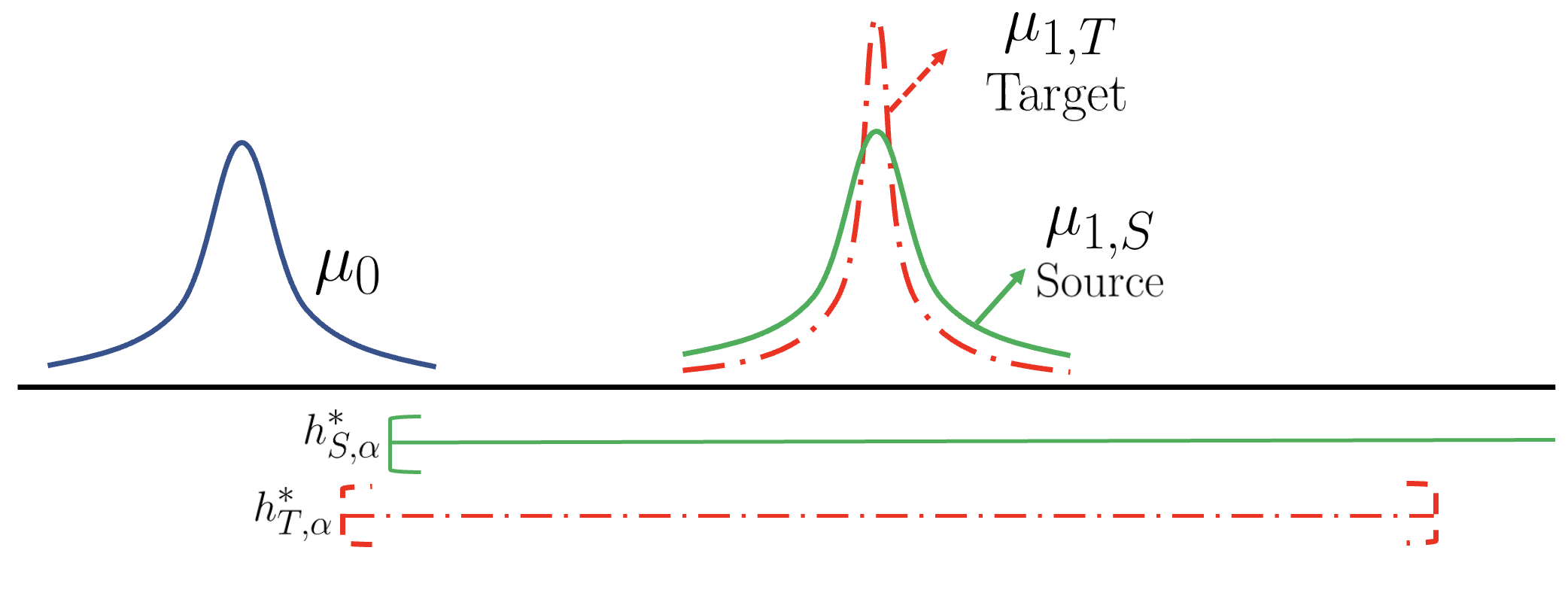}
  \caption{$\mu_0,\mu_{1,S},\mu_{1,T}=\mathcal{N}(a_0,\sigma^2),\mathcal{N}(a_{1,S},\sigma^2),\mathcal{N}(a_{1,T},\sigma'^2)$ where $a_{1,S}=a_{1,T}$ and $\sigma'<\sigma$. Optimal decision rules differ.} 
  \label{fig2}
\end{figure}


\begin{definition}\label{assum}
 We say that {\bf $\alpha$ is achievable} if there exist thresholds $\lambda$ and $\lambda'$ such that $\mu_0(\mathcal{L}_{\lambda}^{S})=\alpha$ and $\mu_0(\mathcal{L}_{\lambda'}^{T})=\alpha$.
\end{definition}
\begin{definition}[$\alpha$-level set]\label{alpha-level} Whenever $\alpha$ is achievable, we define $\mathcal{L}^S(\alpha)$ as the level set in the source whose measure under $\mu_0$ is $\alpha$. The definition of $\mathcal{L}^T(\alpha)$ which corresponds to the target is the same.
\end{definition}
\begin{remark}\label{rem1.3}
Definition \ref{assum} ensures that $\mathcal{L}^S(\alpha)$ exists, but it may not be unique. However, we will show in Appendix \ref{APP_A} by Proposition \ref{pro_ney}, it is unique a.s. $\nu$.
\end{remark}




The following proposition relaxes the conditions of Proposition \ref{prop:NP-Corollary} above by restricting attention to universal classifiers in $\cal U^*$. Its proof is technical to various corner cases described in Appendix \ref{APP_A}. 

\begin{proposition}\label{lem6}
Let $0\leq\alpha<1$ and suppose that $\alpha$ is achievable. Then $(\mu_0,\mu_{1,S}, \alpha)$ is equivalent to $(\mu_0,\mu_{1,T}, \alpha)$ under $\mathcal{U}^*$ iff $\mathcal{L}^S(\alpha)\in \{\mathcal{L}_{\lambda}^{T}\}_{\lambda\geq 0}$ a.s. $\nu$. In particular, if $\alpha$ is achievable for all $0\leq \alpha<1$ and $\mathcal{L}^S(\alpha)\in \{\mathcal{L}_{\lambda}^{T}\}_{\lambda\geq 0}$ a.s. $\nu$ for all $0\leq \alpha<1$, then $(\mu_0,\mu_{1,S}, \alpha)$ is equivalent to $(\mu_0,\mu_{1,T}, \alpha)$ for all $0\leq \alpha<1$.
\end{proposition}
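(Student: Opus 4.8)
The plan is to prove the biconditional for a fixed achievable $\alpha$, and then observe that the "in particular" claim follows by quantifying over $\alpha$. For the forward direction I would argue the contrapositive: suppose $\mathcal{L}^S(\alpha)$ is not $\nu$-a.s. equal to any target level set $\mathcal{L}_\lambda^T$. By Definition \ref{assum} there is a target threshold $\lambda'$ with $\mu_0(\mathcal{L}_{\lambda'}^T) = \alpha$, so the target problem admits an optimal classifier supported on $\mathcal{L}_{\lambda'}^T$ (plus the set $\{p_0 = 0\}$, to land in $\mathcal{U}^*$); meanwhile, the Neyman--Pearson Lemma applied to the source (invoked via Proposition \ref{pro_ney} / the appendix) tells us $\mathbb{1}_{\mathcal{L}^S(\alpha)}$, suitably modified on $\{p_0=0\}$, is an optimal source classifier. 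If this source-optimal classifier were also target-optimal, then by the (almost-sure) uniqueness of the target solution from Neyman--Pearson it would have to agree $\nu$-a.s. with $\mathbb{1}_{\mathcal{L}_{\lambda'}^T}$ on $\{p_0 > 0\}$, forcing $\mathcal{L}^S(\alpha) = \mathcal{L}_{\lambda'}^T$ a.s.\ $\nu$ — contradiction. So some source-optimal classifier fails to be target-optimal, i.e.\ equivalence fails.

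For the reverse direction, assume $\mathcal{L}^S(\alpha) = \mathcal{L}_{\lambda'}^T$ a.s.\ $\nu$ for some $\lambda' \geq 0$. Take any source solution $h^*_{S,\alpha} \in \mathcal{U}^*$. By the source Neyman--Pearson characterization (and the uniqueness result of Remark \ref{rem1.3} / Proposition \ref{pro_ney}), $h^*_{S,\alpha}$ agrees $\nu$-a.s.\ with $\mathbb{1}_{\mathcal{L}^S(\alpha)}$ on the set $\{p_0 > 0\}$, and equals $1$ on $\{p_0 = 0\}$ by membership in $\mathcal{U}^*$. Using the hypothesis, this means $h^*_{S,\alpha} = 1$ on $\mathcal{L}_{\lambda'}^T \cap \{p_0>0\}$, $=0$ on its complement within $\{p_0>0\}$, and $=1$ on $\{p_0=0\}$ — which is exactly (a representative of) the Neyman--Pearson optimal classifier for the \emph{target} problem at level $\alpha$, since $\mu_0(\mathcal{L}_{\lambda'}^T) = \alpha$ makes the Type-I constraint tight. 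Hence $R_{\mu_0}(h^*_{S,\alpha}) = \alpha$ and $h^*_{S,\alpha}$ attains the target optimum, so it solves (\ref{eq2}); this is exactly equivalence under $\mathcal{U}^*$. The "in particular" statement is then immediate: if the hypotheses hold for every $0 \leq \alpha < 1$, then the per-$\alpha$ equivalence holds for every such $\alpha$.

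The main obstacle I anticipate is \emph{not} the level-set bookkeeping but the careful handling of the $\{p_0 = 0\}$ region and the precise sense of "a.s.\ $\nu$" uniqueness. The restricted class $\mathcal{U}^*$ was introduced precisely to kill the disjoint-support pathology flagged in the Remark after Proposition \ref{prop:NP-Corollary}, and the proof has to use it in exactly the right place: a source-optimal classifier in $\mathcal{U}^*$ is pinned down on $\{p_0 = 0\}$ (it must be $1$ there), which is what lets us conclude it is also a \emph{legitimate} target-optimal classifier rather than merely optimal on the overlap. I would also need to check the corner case $\lambda' = 0$ and $\alpha = 0$, and confirm that "$\alpha$ achievable" genuinely delivers a target threshold $\lambda'$ with $\mu_0(\mathcal{L}_{\lambda'}^T) = \alpha$ so that the target constraint is active and Neyman--Pearson applies in the clean form. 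All of this is routed through the appendix (Proposition \ref{pro_ney} and the classical Neyman--Pearson Lemma), so the body-level argument should be the short chain of implications sketched above, with the delicate measure-theoretic steps cited rather than reproved.
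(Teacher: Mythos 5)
Your overall route is the same as the paper's: both directions go through the Neyman--Pearson Lemma combined with the a.s.-uniqueness statement of Proposition \ref{pro_ney}, and your forward (necessity) argument is just the contrapositive of the paper's direct one, so that part, as well as the final transfer of optimality once the source solution is identified a.s.\ $\nu$ with $\mathbb{1}_{\mathcal{L}_{\lambda'}^T}$, is fine.

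The gap is in the sufficiency direction. You apply Proposition \ref{pro_ney} to an \emph{arbitrary} source solution $h^*_{S,\alpha}\in\mathcal{U}^*$ to conclude that it agrees a.s.\ $\nu$ with $\mathbb{1}_{\mathcal{L}^S(\alpha)}$, but that proposition requires $R_{\mu_0}(h^*_{S,\alpha})=\alpha$, i.e.\ a tight Type-I constraint, and this is not automatic: part (ii) of the Neyman--Pearson Lemma guarantees tightness only \emph{unless} there exists a classifier $h'$ with $R_{\mu_0}(h')<\alpha$ and $R_{\mu_{1,S}}(h')=0$. In that degenerate situation a source solution can have $R_{\mu_0}<\alpha$ together with zero source Type-II error, it then need not coincide a.s.\ with the $\alpha$-level set, and your identification step (hence the conclusion that it solves the target problem) breaks down --- this slack-constraint scenario is exactly the kind of shared-support pathology the proposition is meant to survive, so it cannot be waved away. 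The paper closes this hole with a dedicated argument, Lemma \ref{techn}, which uses $\alpha$-achievability and $\alpha<1$ (this is in fact where the hypothesis $\alpha<1$ enters) to show that no source solution with $R_{\mu_0}(h)<\alpha$ and $R_{\mu_{1,S}}(h)=0$ can exist; the remaining case $R_{\mu_{1,S}}(h^S)=0$ with $R_{\mu_0}(h^S)=\alpha$ is then handled like the generic one. The corner cases you flag ($\lambda'=0$, $\alpha=0$) do not cover this; without an analogue of Lemma \ref{techn} your sketch is incomplete.
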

\begin{proof}
    See \ref{proof_suff}.
\end{proof}
\begin{remark}\label{new_remark}
Notice that the statements of Proposition \ref{lem6} trivially also hold over any hypothesis class $\cal H$ (rather than just $\cal U^*$) containing the level sets $\mathbb{1}_{\mathcal{L}^S(\alpha)},\mathbb{1}_{\mathcal{L}^T(\alpha)}\in \mathcal{H}$ and where, for every $h\in\mathcal{H}$, $h\equiv 1$ on $\{x:p_0(x)=0\}$ a.s. $\nu$.
\end{remark}

We illustrate this final proposition in Figure \ref{fig2_example}: the simple restriction to $\cal U^*$ reveals more scenarios where source is equivalent to target (at the population level) even when the distributions appear significantly different. 

\begin{figure}
\centering
\begin{minipage}{.23\textwidth}
\includegraphics[height=0.35\textwidth,width=1.3\textwidth]{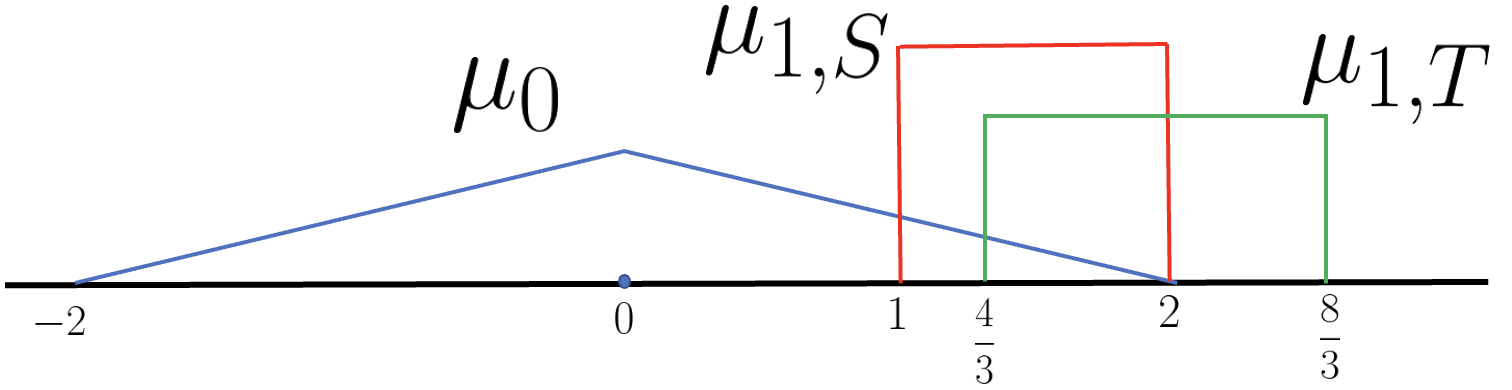}
  \subcaption{$\mu_0,\mu_{1,S},\mu_{1,T}$}\label{sub1_example}
\end{minipage}
\hspace{10mm}
\begin{minipage}{.23\textwidth}
  \includegraphics[height=0.35\textwidth,width=1.3\textwidth]{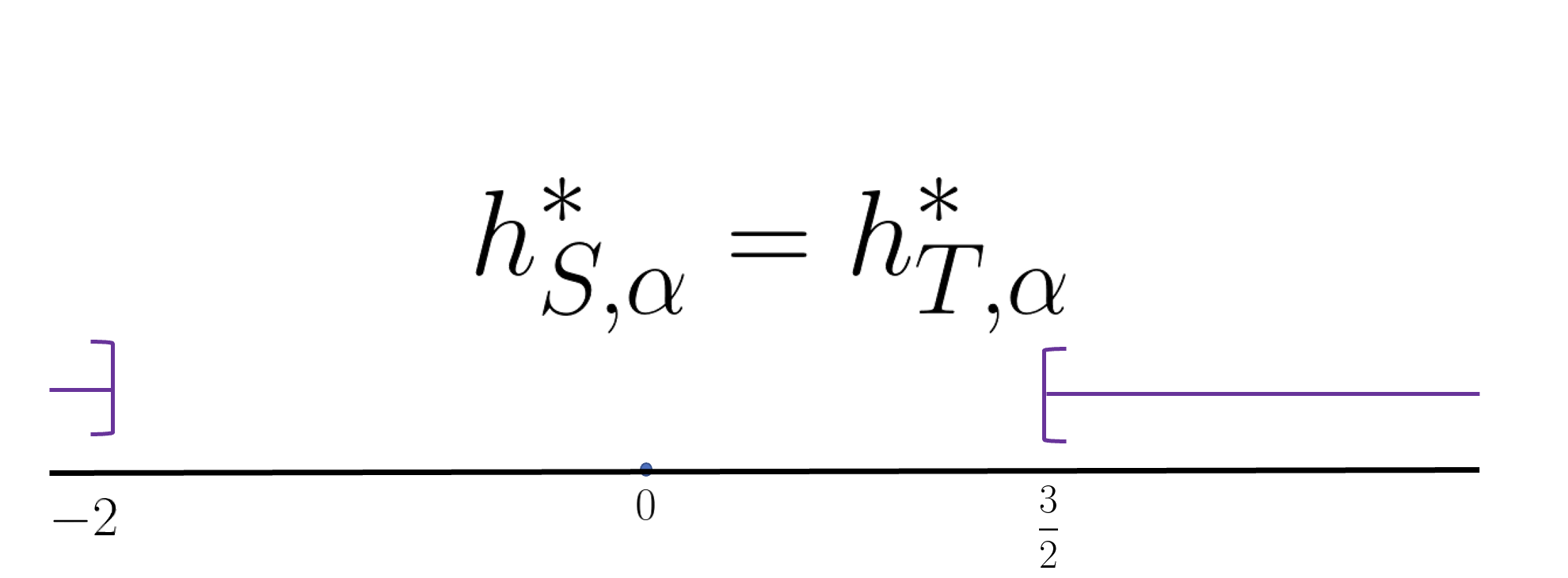}

  \subcaption{Source and target solutions. }\label{sub2_example}
\end{minipage}
\hspace{10mm}
\begin{minipage}{.23\textwidth}
  \includegraphics[height=0.42\textwidth,width=1.1\textwidth]{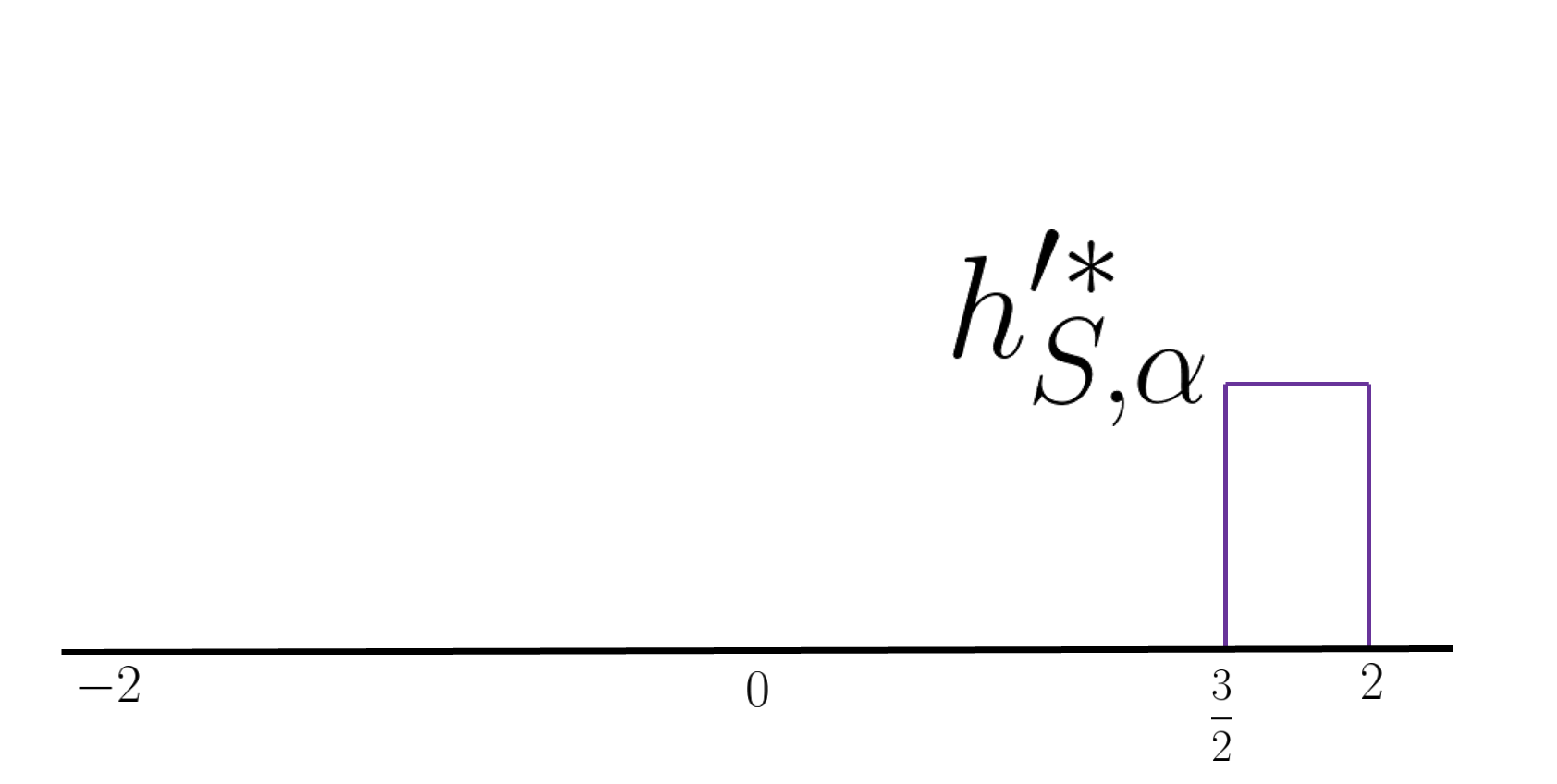}
  \subcaption{$h'^*_{S,\alpha}$ is a solution for source but not for target.}\label{sub3_example}
\end{minipage}
\caption{Illustration of Example \ref{ex_3.4} (see Appendix \ref{App_3}), where $h^*_{S,\alpha}=h^*_{T,\alpha}$ in $\mathcal{U}^*$, while $h'^*_{S,\alpha} \in \mathcal{U}$ is also a solution for source but not for target. In other words, the source problem remains equivalent to the target over the more \emph{reasonable} decision rules of $\cal U^*$.}
\label{fig2_example}%
\end{figure}
\section{Finite-Sample Results}
Neyman-Pearson Lemma offers the solution(s) of the optimization problem (\ref{eq2}) when we have the knowledge of the underlying distributions. However, in practical scenarios, we typically lack information about these distributions and only possess some samples drawn from them. In addressing this challenge, \cite{cannon2002learning} embarked on an empirical study of Neyman-Pearson classification and introduced a relaxed version of Neyman-Pearson classification problem. Let $n_0$ and $n_T$ be the number of i.i.d. samples
generated by $\mu_0$ and $\mu_{1,T}$, respectively, and $\epsilon_0 >0$. \cite{cannon2002learning} proposed the following optimization problem:
\begin{align}\label{eq3}
    \hat{h}=&\argmin_{h\in \mathcal{H}} \ \hat{R}_{\mu_{1,T}}(h)\nonumber\\
    & \text{s.t.} \ \hat{R}_{\mu_0}(h)\leq \alpha+\frac{\epsilon_0}{2}
\end{align}
where $\hat{R}_{\mu_0}(h)=\frac{1}{n_0}\sum\limits_{X_i \sim \mu_0} \mathbb{1}_{\{h(X_i)\neq 0\}}$ and $\hat{R}_{\mu_{1,T}}(h)=\frac{1}{n_T}\sum\limits_{X_i \sim \mu_{1,T}} \mathbb{1}_{\{h(X_i)\neq 1\}}$, and then derived the convergence rate of excess error in terms of the number of samples and VC dimension of the hypothesis class.

Neyman-Pearson classification in the setting of transfer finite-sample scenarios remains unexplored. In this section, Our objective is to understand the fundamental limits of transfer outlier detection in the finite-sample regime, where there are $n_0,n_S,n_T$ i.i.d. samples from $\mu_0,\mu_{1,S},\mu_{1,T}$, under a measure of discrepancy between source and target. We first define a discrepancy measure in transfer outlier detection and then characterize the fundamental limits of the problem by deriving a minimax lower bound in terms of the number of samples as well as the notion of discrepancy. Finally, we show that this lower bound is achievable through an adaptive algorithm which does not require the prior knowledge of the discrepancy between source and target.

\subsection{Outlier Transfer Exponent}
We aim to define an appropriate notion of outlier transfer distance between source and target under a hypothesis class. Here we adapt the transfer exponent notion defined in \cite{hanneke2019value} to a notion of discrepancy between source and target in outlier detection.
\begin{definition}[Outlier transfer exponent]\label{def_dis}
    Let $S^*_{\alpha}\subset\mathcal{H}$ denote the set of solutions of source (\ref{eq1}). Furthermore, let $(\mu_0,\mu_{1,S}, \alpha)$ and $(\mu_0,\mu_{1,T}, \alpha)$ denote the source and target problems, respectively. We call $\rho(r)>0$ outlier transfer exponent from $(\mu_0,\mu_{1,S}, \alpha)$ to $(\mu_0,\mu_{1,T}, \alpha)$ under $\cal H$, if there exist $r, C_{\rho(r)}>0$ such that 
\begin{align}\label{dist}
        C_{\rho(r)}\cdot \max\bigg\{0,R_{\mu_{1,S}}(h)-R_{\mu_{1,S}}(h^*_{S,\alpha})\bigg\}\geq \max\bigg\{0,R_{\mu_{1,T}}(h)-R_{\mu_{1,T}}(h^*_{S,\alpha})\bigg\}^{\rho(r)} 
    \end{align}
for all $h \in \mathcal{H} \ \text{with} \ R_{\mu_0}(h)\leq \alpha+r$, where $h^*_{S,\alpha}=\underset{h\in S^*_{\alpha}}{\argmax} \  R_{\mu_{1,T}}(h)$.
\end{definition}
We will show later (Theorem \ref{minimax_rate}) that this notion of discrepancy captures the fundamental limits of transfer outlier detection. The following example shows that for a fixed target and for any arbitrary $\rho\geq 1$, there exists a source such that it shares the same optimal decision rule as the target and attains the outlier transfer exponent $\rho$ with coefficient $C_{\rho}=1$. Therefore, for a given target, there can exist a wide range of sources with differing levels of discrepancy, all of which nevertheless share the same optimal decision rules.
\begin{example} Let $\mu_0\sim \mathcal{N}(-1,1)$, $\mu_{1,T}\sim Unif[0,1]$, $p_{1,S}=\rho x^{\rho-1}\mathbb{1}_{\{x\in[0,1]\}}$ for $\rho\geq 1$ where $p_{1,S}$ is the density of $\mu_{1,S}$ w.r.t. Lebesgue measure. Furthermore, let $\alpha=\mu_0([0,1])$ ,$\mathcal{H}=\{\mathbb{1}_{\{t\leq x\leq 1\}}(x):t\in [-1,1]\}$, and $r$ be small enough. Then, we have $h^*_{T,\alpha}=h^*_{S,\alpha}=\mathbb{1}_{\{0\leq x\leq 1\}}$. Moreover, for $h=\mathbb{1}_{\{t\leq x\leq 1\}}$ for $t\geq 0$ we obtain 
\begin{align*}
R_{\mu_{1,T}}(h)-R_{\mu_{1,T}}(h^*_{S,\alpha})=t \ \ \text{and}\ \ 
R_{\mu_{1,S}}(h)-R_{\mu_{1,S}}(h^*_{S,\alpha})=t^{\rho}.
\end{align*}
Hence, the outlier transfer exponent is $\rho$ and the coefficient $C_{\rho}$ is $1$. 
\end{example}
Following proposition shows the effect of $r$ on the outlier transfer exponent $\rho(r)$. For small enough $r$, $\rho(r)$ could be small while for a large $r$, $\rho(r)$ is infinite.
\begin{proposition}\label{prop_example}
    There exist $\mu_0,\mu_{1,S},\mu_{1,T},\mathcal{H},\alpha,r$ such that for any $h\in \mathcal{H}$ with $R_{\mu_0}(h)\leq \alpha+r$, (\ref{dist}) holds for $\rho(r)=1$ and there exists an $h\in\mathcal{H}$ with $R_{\mu_0}(h)> \alpha+r$ for which (\ref{dist}) does not hold for any $\rho<\infty$.
\end{proposition}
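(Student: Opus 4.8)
The plan is to prove this existential statement by an explicit example on a three-point space, tuned so that the classifiers with $\mu_0$-error at most $\alpha+r$ have transfer exponent $1$, while allowing $\mu_0$-error slightly above $\alpha+r$ admits a classifier for which no finite transfer exponent works. I would take $\mathcal{X}=\{1,2,3\}$, the restricted hypothesis class $\mathcal{H}=\{\mathbb{1}_{\emptyset},\,\mathbb{1}_{\{1,2\}},\,\mathbb{1}_{\{1,3\}}\}$, the distributions $\mu_0(\{1\})=\mu_0(\{2\})=0.1$, $\mu_0(\{3\})=0.8$, $\mu_{1,S}(\{1\})=1$, $\mu_{1,T}(\{1\})=\mu_{1,T}(\{2\})=1/2$, and the parameters $\alpha=0.2$, $r=0.1$.

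The first step is to pin down $h^*_{S,\alpha}$ as specified in Definition \ref{def_dis}. Among the elements of $\mathcal{H}$, only $\mathbb{1}_{\emptyset}$ and $\mathbb{1}_{\{1,2\}}$ are source-feasible, since $R_{\mu_0}(\mathbb{1}_{\{1,3\}})=0.9>\alpha$ while $R_{\mu_0}(\mathbb{1}_{\emptyset})=0$ and $R_{\mu_0}(\mathbb{1}_{\{1,2\}})=0.2=\alpha$; of these, $\mathbb{1}_{\{1,2\}}$ attains $R_{\mu_{1,S}}=0$ and $\mathbb{1}_{\emptyset}$ has $R_{\mu_{1,S}}=1$. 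Hence $S^*_{\alpha}=\{\mathbb{1}_{\{1,2\}}\}$, so $h^*_{S,\alpha}=\mathbb{1}_{\{1,2\}}$, with $R_{\mu_{1,S}}(h^*_{S,\alpha})=0$ and $R_{\mu_{1,T}}(h^*_{S,\alpha})=\mu_{1,T}(\{3\})=0$. Using a genuinely \emph{restricted} $\mathcal{H}$ is what makes this work: because $\mathcal{H}$ contains no source-optimal rule labelling point $2$ as common (in particular $\mathbb{1}_{\{1\}}\notin\mathcal{H}$), the source-optimal $h^*_{S,\alpha}$ is forced to label the target-relevant, source-irrelevant point $2$ as rare, whereas $h':=\mathbb{1}_{\{1,3\}}$ ``escapes'' point $2$ at the cost of labelling point $3$---which carries most of the $\mu_0$-mass---as rare, which is exactly why $R_{\mu_0}(h')>\alpha+r$.

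It then remains to verify the two assertions, which is elementary arithmetic. The classifiers $h\in\mathcal{H}$ with $R_{\mu_0}(h)\le\alpha+r=0.3$ are precisely $h^*_{S,\alpha}$ (both excesses in (\ref{dist}) vanish, so the inequality is trivial) and $\mathbb{1}_{\emptyset}$ (with $R_{\mu_{1,S}}(\mathbb{1}_{\emptyset})=R_{\mu_{1,T}}(\mathbb{1}_{\emptyset})=1$, hence source-excess $1$ and target-excess $1$); so (\ref{dist}) holds for all such $h$ with $\rho(r)=1$ and $C_{\rho(r)}=1$. Conversely $h'=\mathbb{1}_{\{1,3\}}\in\mathcal{H}$ has $R_{\mu_0}(h')=0.9>\alpha+r$, with $R_{\mu_{1,S}}(h')=\mu_{1,S}(\{2\})=0$ and $R_{\mu_{1,T}}(h')=\mu_{1,T}(\{2\})=1/2$; thus the source-excess of $h'$ relative to $h^*_{S,\alpha}$ is $0$ and its target-excess is $1/2$, so (\ref{dist}) would require $C\cdot 0\ge (1/2)^{\rho}$, which is impossible for any finite $\rho$ and any $C>0$. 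Since the claim is existential, there is no genuinely hard step; the one place to be careful is keeping $\mathcal{H}$ restricted enough that the source-optimal rule really covers point $2$---this is precisely what drives the phenomenon, since otherwise the benchmark $h^*_{S,\alpha}$ already misses point $2$ and $h'$ carries no target-excess against it. (If a non-atomic example is preferred, the same configuration can be realized with $\mu_0,\mu_{1,S},\mu_{1,T}$ supported on three disjoint intervals and $\mathcal{H}$ a three-element family of finite unions of intervals, but the discrete version already suffices.)
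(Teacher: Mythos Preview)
Your proof is correct. The construction on a three-point space works exactly as you describe: the feasible classifiers (those with $R_{\mu_0}\le\alpha+r$) are just $\mathbb{1}_\emptyset$ and $h^*_{S,\alpha}=\mathbb{1}_{\{1,2\}}$, for which the source and target excesses coincide, giving $\rho(r)=1$; and the infeasible $h'=\mathbb{1}_{\{1,3\}}$ has zero source excess but target excess $1/2$, so no finite $\rho$ can satisfy (\ref{dist}).

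The paper's own proof takes a different, continuous route: it sets $\mu_0=\mathcal{N}(0,1)$, $\mu_{1,S}=\mathrm{Unif}[0,1]$, and $\mu_{1,T}$ a two-piece uniform mixture on $[t_1,1]$ with a heavier right piece, together with an infinite hypothesis class $\mathcal{H}=\{\mathbb{1}_{[a,1]\cup[b,t_0]}:a\in[t_0,1],\,b\in[t_1,t_0]\}$. There, for feasible $h$ the source and target excesses are exactly proportional (again $\rho(r)=1$), while the violating classifier is $h=\mathbb{1}_{[2t_0-1-\epsilon,\,t_0]}$, which has \emph{negative} source excess (hence the $\max\{0,\cdot\}$ clips it to $0$) and strictly positive target excess. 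The mechanism is the same as yours---a classifier with vanishing clipped source excess but positive target excess---but the paper obtains it via a negative excess rather than an exact tie, and does so within a continuous parametric family. Your discrete three-hypothesis example is more elementary and arguably cleaner for the purely existential claim; the paper's construction has the minor advantage of exhibiting the phenomenon within an infinite class where the feasible region genuinely varies with $r$, making the dependence of $\rho$ on $r$ slightly more visible.
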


\subsection{Minimax Lower Bound for Transfer Outlier Detection}\label{sec5}

Equipped with the notion of outlier transfer exponent, we characterize the fundamental limits of transfer outlier detection by establishing a minimax lower bound. To achieve this objective, first we need to specify the class of distributions for which the minimax lower bound is derived.
\begin{definition}[Class of distributions] 
Fix a hypothesis class $\mathcal{H}$ with finite VC dimension $d_{\mathcal{H}}$. Let $\mathcal{F}_{\mathcal{H}}(\rho,\alpha,C,\Delta)$ denote the class of triplets of distributions $(\mu_0,\mu_{1,S},\mu_{1,T})$ for which $\alpha$ is achievable according to Definition \ref{assum}, $\mathcal{E}_{1,T}(h^*_{S,\alpha})\leq \Delta$, and there exist $\rho(r)\leq \rho, C_{\rho(r)}\leq C$ for any $0<r<\frac{2\alpha}{d_{\mathcal{H}}}$ per Definition \ref{def_dis}.
\end{definition}
\begin{remark}
    Deriving a minimax lower bound for the class of distributions satisfying $\alpha$ achievability is a stronger result than for the class without necessarily satisfying that, as the former is contained in the latter.
\end{remark}

\begin{theorem}[Minimax lower bound]\label{minimax_rate}
Fix a hypothesis class $\mathcal{H}$ with finite VC dimension $d_{\mathcal{H}}\geq 3$. Moreover, let $\alpha<\frac{1}{2}, \rho\geq 1, \Delta\leq 1$, and $\delta_0>0$. Furthermore, Let $\hat{h}$ be any learner's classifier that is supposed to output a classifier from $\{h\in \mathcal{H}: \mu_0(h)\leq \alpha+\frac{2\alpha}{d_{\mathcal{H}}}\}$ with probability at least $1-\delta_0$, by having access to $n_0,n_S,n_T$ i.i.d. samples from $\mu_0,\mu_{1,S},\mu_{1,T}$, respectively. Denote the samples from $\mu_0,\mu_{1,S},\mu_{1,T}$ by $S_{\mu_0},S_{\mu_{1,S}},S_{\mu_{1,T}}$ and suppose that there are sufficiently many samples such that $\min\{\Delta+(\frac{d_{\mathcal{H}}}{n_S})^{\frac{1}{2\rho}},(\frac{d_{\mathcal{H}}}{n_T})^{\frac{1}{2}}\}\leq 2$. Then, we have
\begin{align}\label{lower_label_ineq}
\inf_{\hat{h}}\sup_{\mathcal{F}(\rho,\alpha,1,\Delta)}\underset{S_{\mu_0},S_{\mu_{1,S}},S_{\mu_{1,T}}}{\mathbb{E}}\big[\mathcal{E}_{1,T}(\hat{h})\big]\geq c\cdot \min\{\Delta+(\frac{d_{\mathcal{H}}}{n_S})^{\frac{1}{2\rho}},(\frac{d_{\mathcal{H}}}{n_T})^{\frac{1}{2}}\}
\end{align}
where $c$ is a numerical constant.
\end{theorem}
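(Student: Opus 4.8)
The plan is to establish the two terms in the minimum separately via Le Cam-style two-point (or small finite-family) arguments, then combine them. The overall bound is a minimum of two quantities, so it suffices to show that \emph{each} is a valid lower bound on the minimax risk; the harder and more novel term is $\Delta + (d_{\mathcal{H}}/n_S)^{1/(2\rho)}$, which encodes the role of the source sample size through the transfer exponent.

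First I would handle the "target-only" term $(d_{\mathcal{H}}/n_T)^{1/2}$. Here the idea is to construct a family of target rare-class distributions $\mu_{1,T}$ that are hard to distinguish from $n_T$ samples, while keeping $\mu_{1,S}$ (and the pair $(\mu_0,\mu_{1,S})$) \emph{fixed and uninformative} about which target is in play — e.g. choosing $\mu_{1,S}$ so that $h^*_{S,\alpha}$ is consistent with all targets in the family and the transfer exponent constraint is vacuous or trivially met with the fixed constants. Then the source samples give the learner no discriminating power, and the problem reduces to the known minimax rate for Neyman–Pearson classification over a VC class of dimension $d_{\mathcal{H}}$, which scales as $(d_{\mathcal{H}}/n_T)^{1/2}$; this is essentially the lower-bound construction behind \cite{cannon2002learning}-type rates, shattering $d_{\mathcal{H}}$ points and using a hypercube/Assouad argument with Varshamov–Gilbert, with the Type-I constraint slack $2\alpha/d_{\mathcal{H}}$ chosen so the constructed classifiers remain feasible.

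Next, the "source-only" term. The $\Delta$ piece is immediate: pick $\mu_{1,T}$ so that $\mathcal{E}_{1,T}(h^*_{S,\alpha}) = \Delta$ exactly and so that \emph{no} $h \in \mathcal{H}$ feasible for the target does better than $h^*_{S,\alpha}$ on targets indistinguishable from source data — i.e. the learner, having effectively only source information, cannot beat $h^*_{S,\alpha}$, incurring excess error $\Delta$. For the $(d_{\mathcal{H}}/n_S)^{1/(2\rho)}$ piece, I would build a two-point (or Assouad) family indexed by perturbations of $\mu_{1,S}$ near the decision boundary: two source distributions $\mu_{1,S}^{(0)},\mu_{1,S}^{(1)}$ whose optimal classifiers $h^*_{S,\alpha}$ differ, are $O(\gamma^\rho)$-close in $\mu_{1,S}$-excess risk (hence require $n_S \gtrsim \gamma^{-2\rho} d_{\mathcal{H}}$ samples to tell apart, by the standard $\mathrm{KL} \lesssim n_S \cdot (\text{squared Hellinger})$ / Bretagnolle–Huber bound), yet are $\Theta(\gamma)$-separated in $\mu_{1,T}$-excess risk because the transfer exponent relation (\ref{dist}) is \emph{tight}: $R_{\mu_{1,S}}$-gap $= (R_{\mu_{1,T}}$-gap$)^\rho$. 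Setting $\gamma \asymp (d_{\mathcal{H}}/n_S)^{1/(2\rho)}$ and applying Le Cam (or Assouad over $d_{\mathcal{H}}$ shattered coordinates to get the $d_{\mathcal{H}}$ dependence inside) yields the claimed lower bound; one must also supply a fixed $\mu_{1,T}$ and a common $\mu_0$ compatible with both source hypotheses so that the triplet lies in $\mathcal{F}_{\mathcal{H}}(\rho,\alpha,1,\Delta)$, which is where the $C=1$ normalization and achievability of $\alpha$ get used.

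The main obstacle I anticipate is engineering a single construction — concrete $\mu_0$, a family $\{\mu_{1,S}\}$, and a target $\mu_{1,T}$ on a shattered set of $d_{\mathcal{H}}$ points — that \emph{simultaneously} (i) realizes the transfer exponent $\rho$ with constant exactly $1$ over the relevant radius $r < 2\alpha/d_{\mathcal{H}}$, (ii) keeps $\alpha$ achievable with the prescribed Type-I slack so the learner's feasible set is nonempty and the "bad" classifiers are feasible, (iii) makes $\mathcal{E}_{1,T}(h^*_{S,\alpha}) \le \Delta$, and (iv) keeps the source perturbations information-theoretically indistinguishable at the scale $\gamma \asymp (d_{\mathcal{H}}/n_S)^{1/(2\rho)}$ — all at once, and then correctly stitching the two regimes so the bound comes out as the \emph{minimum} (intuitively: whichever of source-vs-target is the binding constraint dominates, and one picks the construction accordingly, or runs a single family that degrades to both). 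Verifying feasibility of the constructed classifiers under the perturbed Type-I constraint, and controlling the $d_{\mathcal{H}}\ge 3$ edge cases in the Assouad packing, are the fiddly but routine parts.
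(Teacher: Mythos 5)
Your overall architecture has a genuine flaw. The opening logic---``the bound is a minimum of two quantities, so it suffices to show that each is a valid lower bound''---is backwards: if each term were separately a valid lower bound on the minimax risk, you would have proved a lower bound of the \emph{maximum}, and in fact neither term is a valid lower bound on its own (e.g.\ with $\Delta=0$, $\rho=1$ and $n_S\gg n_T$ the minimax risk is far below $(d_{\mathcal{H}}/n_T)^{1/2}$). The minimum arises because a \emph{single} hard family must be indistinguishable from the source samples \emph{and} the target samples simultaneously. That is exactly how the paper proceeds: it perturbs source and target jointly, indexed by the same $\sigma$, with target perturbation $\epsilon=c_1\min\{(d_{\mathcal{H}}/n_S)^{\frac{1}{2\rho}},(d_{\mathcal{H}}/n_T)^{\frac{1}{2}}\}$ and source perturbation $\epsilon^{\rho}$, so that $n_S\epsilon^{2\rho}\lesssim d_{\mathcal{H}}$ and $n_T\epsilon^{2}\lesssim d_{\mathcal{H}}$ hold at once; it then handles the $\Delta$ part with a second family (identical sources, targets perturbed at scale $c_1\min\{\Delta,(d_{\mathcal{H}}/n_T)^{\frac{1}{2}}\}$), and stitches the two via the elementary inequality $\min\{\Delta+a,\,b\}\le 2\max\{\min\{a,b\},\,\min\{\Delta,b\}\}$.

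Concretely, your construction for the source term fails: you fix $\mu_{1,T}$ and vary only $\mu_{1,S}^{(0)},\mu_{1,S}^{(1)}$. But the loss $\mathcal{E}_{1,T}(\hat{h})$ depends only on $\hat{h}$ and the target, which is identical under both hypotheses; since in a minimax lower bound the learner may know the two-point family, it can ignore all data and output a classifier optimal for the (known, common) target, incurring zero excess risk. The Tsybakov/Le Cam machinery requires the loss-relevant parameters---the target-optimal classifiers---to be $2s$-separated across the family, which forces the target to vary with the index, with the source perturbation tied to it at scale $\epsilon^{\rho}$ so that the transfer exponent is realized with $C=1$. Symmetrically, your ``target-only'' family with a fixed, uninformative source cannot deliver $(d_{\mathcal{H}}/n_T)^{\frac{1}{2}}$ by itself: membership in $\mathcal{F}_{\mathcal{H}}(\rho,\alpha,1,\Delta)$ demands $\mathcal{E}_{1,T}(h^*_{S,\alpha})\le\Delta$, so the achievable separation is capped by $\Delta$ and you only get $\min\{\Delta,(d_{\mathcal{H}}/n_T)^{\frac{1}{2}}\}$---which is indeed the correct second piece, but only once the decomposition above is in place. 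Your remaining ingredients (shattered set, Gilbert--Varshamov packing, KL scaling, tightness of the transfer-exponent relation, checking achievability and the Type-I slack) do match the paper, but without the joint source-target perturbation the minimum in the rate cannot be obtained.
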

\begin{remark}
In Theorem \ref{minimax_rate}, the minimax lower bound for the class of equivalent source and target distributions, i.e, $\Delta=0$, reduces to $c\cdot\min\{(\frac{d_{\mathcal{H}}}{n_S})^{\frac{1}{2\rho}},(\frac{d_{\mathcal{H}}}{n_T})^{\frac{1}{2}}\}$. In this case, by only having access to unlimited source samples, achieving an arbitrary small target-excess error is possible.
\end{remark}
\begin{remark}
The outlier transfer exponent in the term $(\frac{d_{\mathcal{H}}}{n_S})^{\frac{1}{2\rho}}$ captures the relative effectiveness of source samples in the target. If source and target share the same optimal decision rules and $\rho=1$, source samples would be equally effective as target samples. However, even if the source and target share the same optimal decision rules, source samples may result in poor transfer performance when $\rho$ is large.
\end{remark}
\begin{remark}
    In Theorem \ref{minimax_rate}, the learner is allowed to output a classifier $\hat{h}$ with a Type-I error that slightly exceeds the pre-defined threshold $\alpha$. However, in certain applications, it is imperative to uphold the threshold without any exceeding. The minimax lower bound in Theorem \ref{minimax_rate}, implies that (\ref{lower_label_ineq}) holds even if the learner is only allowed to output a classifier $\hat{h}$ from $\{h\in \mathcal{H}: \mu_0(h)\leq \alpha\}$ with probability $1-\delta_0$.
\end{remark}
\subsection{Adaptive Rates}\label{Adaptive_Sec}
In Section \ref{sec5}, we establish a minimax lower bound that can be attained by an oracle that effectively disregards the least informative dataset from either the source or target. Let 
\begin{align*}
    &\hat{\mathcal{H}}_0=\{h\in \mathcal{H}: \hat{R}_{\mu_0}(h)\leq \alpha+\frac{\epsilon_0}{2}\}\\
    &\hat{h}_T=\underset{h\in \hat{\mathcal{H}}_0}{\argmin}  \ \hat{R}_{\mu_{1,T}}(h)\\
&\hat{h}_S=\underset{h\in \hat{\mathcal{H}}_0}{\argmin}\  \hat{R}_{\mu_{1,S}}(h).
\end{align*}
Then 
\begin{align*}
   \mathcal{E}_{1,T}(\hat{h}_S)\leq \mathcal{E}_{1,T}(h^*_{S,\alpha})+(\frac{d_{\mathcal{H}}}{n_S})^{\frac{1}{2\rho}} \ \ \ \text{and}\ \ \ \mathcal{E}_{1,T}(\hat{h}_T)\leq(\frac{d_{\mathcal{H}}}{n_T})^{\frac{1}{2}}
\end{align*}
with high probability. 
However, deciding whether $\hat{h}_S$ or $\hat{h}_T$ achieves a
smaller error requires the knowledge of outlier transfer exponent and the target Type-II error of the optimal source decision rule, which are not available in practical scenarios.

In this section, we show that by using an adaptive algorithm that takes source and target samples as input and produces a hypothesis $\hat{h}\in \mathcal{H}$ without using any additional information such as prior knowledge of the outlier transfer exponent, the rate is achievable. To accomplish this, we adapt the algorithm introduced in \cite{hanneke2019value}. First we state the following lemma proved by \cite{vapnik1974theory}.
\begin{lemma}\label{lem_vc}
    Let $\mu_1$ be a probability measure. Moreover,
    let $\mathcal{H}$ be a hypothesis class with finite VC dimension $d_{\mathcal{H}}$ and  $A_n=\frac{d_{\mathcal{H}}}{n}\log{\big(\frac{\text{max}\{n,d_{\mathcal{H}}\}}{d_{\mathcal{H}}}\big)}+\frac{1}{n}\log{(\frac{1}{\delta})}$. Then with probability at least $1-\frac{\delta}{3}$, $\forall h,h'\in \mathcal{H}$
    \begin{align*}
        R_{\mu_1}(h)-R_{\mu_1}(h')\leq \hat{R}_{\mu_1}(h)-\hat{R}_{\mu_1}(h')
        +c\sqrt{\min\{\mu_1(h\neq h'),\hat{\mu}_1(h\neq h')\}A_n}+cA_n
    \end{align*}
where $c\in(0,\infty)$ is a universal constant, $\hat{R}_{\mu_1}(h)=\frac{1}{n}\sum\limits_{X_i\sim \mu_1}\mathbb{1}_{\{h(X_i)\neq 1\}}$ for $n$ number of i.i.d. samples generated by $\mu_1$, and $\hat{\mu}_1$ denotes the corresponding empirical measure.
\end{lemma}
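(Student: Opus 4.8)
The plan is to reduce the pairwise statement to a classical ratio‑type (relative) uniform deviation inequality applied to an auxiliary Boolean‑combination class, and then to read off the constants; throughout, $\hat{\mu}_1$ and the empirical risks refer to the $n$ i.i.d.\ samples from $\mu_1$. First I would fix $h,h'\in\mathcal{H}$ and introduce the two one‑sided disagreement regions $G^{+}_{h,h'}:=\{x:h(x)=0,\,h'(x)=1\}$ and $G^{-}_{h,h'}:=\{x:h(x)=1,\,h'(x)=0\}$, whose union is $\{x:h(x)\ne h'(x)\}$. Splitting $R_{\mu_1}(h)=\mu_1(h=0)$ and $R_{\mu_1}(h')=\mu_1(h'=0)$ according to the value of the other classifier yields the identity $R_{\mu_1}(h)-R_{\mu_1}(h')=\mu_1(G^{+}_{h,h'})-\mu_1(G^{-}_{h,h'})$, and the same identity holds with $\hat{\mu}_1$ in place of $\mu_1$. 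Subtracting, the left‑hand side of the lemma equals $\big(\mu_1(G^{+}_{h,h'})-\hat{\mu}_1(G^{+}_{h,h'})\big)+\big(\hat{\mu}_1(G^{-}_{h,h'})-\mu_1(G^{-}_{h,h'})\big)$, i.e.\ a one‑sided deviation on $G^{+}_{h,h'}$ plus a one‑sided deviation in the opposite direction on $G^{-}_{h,h'}$.

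Next I would pass to the class $\mathcal{G}:=\{G^{+}_{h,h'}:h,h'\in\mathcal{H}\}\cup\{G^{-}_{h,h'}:h,h'\in\mathcal{H}\}$. Each of its members is the intersection of one set of $\mathcal{H}$ (or a complement thereof) with one further such set, so the standard Sauer--Shelah bookkeeping for Boolean combinations of VC classes gives that $\mathcal{G}$ has VC dimension $O(d_{\mathcal{H}})$ with an absolute constant, hence shatter coefficient at $n$ points at most $\big(\max\{n,d_{\mathcal{H}}\}/d_{\mathcal{H}}\big)^{O(d_{\mathcal{H}})}$; this is precisely the combinatorial order carried by the first summand of $A_n$. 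I would then invoke both directions of the Vapnik--Chervonenkis ratio inequality of \cite{vapnik1974theory} applied to $\mathcal{G}$ with confidence $1-\delta/3$: on a single event of probability at least $1-\delta/3$, simultaneously for every $G\in\mathcal{G}$, one has both $|\mu_1(G)-\hat{\mu}_1(G)|\le c\sqrt{\mu_1(G)\,A_n}+c\,A_n$ and $|\mu_1(G)-\hat{\mu}_1(G)|\le c\sqrt{\hat{\mu}_1(G)\,A_n}+c\,A_n$. Instantiating at $G=G^{+}_{h,h'}$ and $G=G^{-}_{h,h'}$, plugging into the identity from the first paragraph, and using the inclusions $G^{\pm}_{h,h'}\subseteq\{x:h(x)\ne h'(x)\}$ to enlarge the masses under the square roots to $\mu_1(h\ne h')$ (respectively $\hat{\mu}_1(h\ne h')$), one obtains two global upper bounds on the left‑hand side of the lemma; taking the smaller and relabelling the constant yields the claimed inequality, uniformly in $h,h'$ on that event.

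The routine pieces are the VC‑dimension bookkeeping for $\mathcal{G}$ and the tracking of absolute constants through the square‑root solving. The one point that genuinely needs care is retaining the estimate in terms of $\min\{\mu_1(h\ne h'),\hat{\mu}_1(h\ne h')\}$ rather than one of the two alone: one must \emph{not} apply a per‑set minimum to $G^{+}_{h,h'}$ and $G^{-}_{h,h'}$ separately (the minimizing choice could differ between the two sets, leaving a $\sqrt{\max}$‑type bound), but instead use the population‑mass‑under‑root version for both $G^{\pm}_{h,h'}$ to get one global bound $c\sqrt{\mu_1(h\ne h')A_n}+cA_n$, the empirical version for both to get $c\sqrt{\hat{\mu}_1(h\ne h')A_n}+cA_n$, and only then take the minimum of the two, using that the minimum commutes with the monotone map $\sqrt{\cdot}$ so that $\min\{c\sqrt{\mu_1(h\ne h')A_n},\,c\sqrt{\hat{\mu}_1(h\ne h')A_n}\}=c\sqrt{\min\{\mu_1(h\ne h'),\hat{\mu}_1(h\ne h')\}A_n}$.
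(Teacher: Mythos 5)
Your proposal is correct, but note that the paper itself offers no proof of this lemma at all: it is quoted verbatim as a known result of \cite{vapnik1974theory} (it is the same relative, pairwise uniform deviation bound used in \cite{hanneke2019value}), so there is no in-paper argument to match against. What you supply is the standard derivation of that cited result, and it goes through: the identity $R_{\mu_1}(h)-R_{\mu_1}(h')=\mu_1(G^{+}_{h,h'})-\mu_1(G^{-}_{h,h'})$ (and its empirical counterpart) correctly reduces the pairwise statement to two one-sided deviations over the disagreement class; the growth function of the class of pairwise intersections $\{h=0\}\cap\{h'=1\}$ is bounded by the product of two Sauer bounds, so its logarithm is $O\big(d_{\mathcal{H}}\log(\max\{n,d_{\mathcal{H}}\}/d_{\mathcal{H}})\big)$, which is exactly the combinatorial part of $A_n$ up to the absorbed constant $c$; and the two-sided ratio-type VC inequalities (with the additive $cA_n$ term needed when converting between population and empirical masses under the square root, which you correctly retain) give, on one event of probability $1-\delta/3$, both the $\sqrt{\mu_1(\cdot)A_n}$ and the $\sqrt{\hat{\mu}_1(\cdot)A_n}$ versions simultaneously. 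Your flagged subtlety is also handled the right way: forming one global bound in terms of $\mu_1(h\neq h')$, one in terms of $\hat{\mu}_1(h\neq h')$, and only then taking the minimum is precisely what is needed to get $\min\{\mu_1(h\neq h'),\hat{\mu}_1(h\neq h')\}$ inside the root; a per-region minimum would not yield the stated form. The only cosmetic caveat is the degenerate regime $n<d_{\mathcal{H}}$, where the logarithm in $A_n$ vanishes but $A_n\geq 1$, so the inequality is trivially true once $c\geq 1$; it is worth one sentence so the constant bookkeeping is airtight.
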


We propose the following algorithm:
\vspace{4mm}
\begin{mdframed}
\begin{align}\label{alg}
    &\text{Define}\ \hat{h}=\hat{h}_S \ \ \text{if} \ \
    \hat{R}_{\mu_{1,T}}(\hat{h}_S)-\hat{R}_{\mu_{1,T}}(\hat{h}_T) \leq c\sqrt{A_{n_T}},
    \nonumber\\
    &\text{otherwise, define}\ \hat{h}=\hat{h}_T
\end{align}
\end{mdframed}
\vspace{4mm}
\begin{theorem}\label{upper_adaptive}
    Let $\mathcal{H}$ be a hypothesis class with finite VC dimension $d_{\mathcal{H}}\geq 3.$ Furthermore, let $(\mu_0,\mu_{1,S}, \alpha)$ and $(\mu_0,\mu_{1,T}, \alpha)$ denote a source and a target problem. Suppose that there are $n_0,n_S,n_T$ i.i.d. samples from $\mu_0,\mu_{1,S},\mu_{1,T}$, respectively. Let $\delta_0, \delta>0$, $\epsilon_0=\sqrt{128\frac{d_{\mathcal{H}}\log{n_0}+\log(8/\delta_0)}{n_0}}$, $A_{n_S}$ and $A_{n_T}$ be as defined in Lemma \ref{lem_vc}. Moreover, let the outlier transfer exponent be $\rho(r)$ with coefficient $C_{\rho(r)}$ for $r\geq \epsilon_0$. Then the hypothesis $\hat{h}$ obtained by Algorithm (\ref{alg}) satisfies:
    \begin{align*}
&\mathcal{E}_{1,T}(\hat{h})\leq \min\{\mathcal{E}_{1,T}(h^*_{S,\alpha})+C\cdot A_{n_S}^{\frac{1}{2\rho(r)}},C\cdot A_{n_T}^{\frac{1}{2}}\}\nonumber\\
&R_{\mu_0}(\hat{h})\leq \alpha+\epsilon_0
\end{align*}
with probability at least $1-\delta_0-\frac{2\delta}{3}$, where $C\in(0,\infty)$ is a constant depending on $ C_{\rho(r)}, \rho(r)$.
\end{theorem}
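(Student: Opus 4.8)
The plan is to condition on three high-probability events and then reason deterministically on their intersection. First I would invoke standard VC concentration for $\hat R_{\mu_0}$ to argue that, with probability $\geq 1-\delta_0$, simultaneously over $h\in\mathcal H$ we have $|\hat R_{\mu_0}(h)-R_{\mu_0}(h)|\leq \epsilon_0/2$; this is exactly what the choice $\epsilon_0=\sqrt{128(d_{\mathcal H}\log n_0+\log(8/\delta_0))/n_0}$ is calibrated for. On this event, the empirical feasible set $\hat{\mathcal H}_0=\{h:\hat R_{\mu_0}(h)\leq\alpha+\epsilon_0/2\}$ satisfies $\{h:R_{\mu_0}(h)\leq\alpha\}\subseteq\hat{\mathcal H}_0\subseteq\{h:R_{\mu_0}(h)\leq\alpha+\epsilon_0\}$. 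The first inclusion guarantees $h^*_{S,\alpha}\in\hat{\mathcal H}_0$ (since $R_{\mu_0}(h^*_{S,\alpha})\leq\alpha$), so $\hat h_S,\hat h_T,\hat h$ all compete against $h^*_{S,\alpha}$; the second inclusion gives $R_{\mu_0}(\hat h)\leq\alpha+\epsilon_0$, which is one of the two conclusions, and crucially puts every element of $\hat{\mathcal H}_0$ into the regime $R_{\mu_0}(h)\leq\alpha+r$ with $r=\epsilon_0$, so the outlier transfer exponent inequality (\ref{dist}) is available for all $h\in\hat{\mathcal H}_0$. Second and third, I would apply Lemma \ref{lem_vc} once to $\mu_{1,S}$ (with $n=n_S$) and once to $\mu_{1,T}$ (with $n=n_T$), each costing probability $\delta/3$, giving two-sided relative deviation bounds between empirical and population Type-II excess errors; the total failure probability is then $\delta_0+2\delta/3$ as claimed.

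Next, on the intersection of these events, I would establish the two oracle bounds $\mathcal E_{1,T}(\hat h_T)\leq C\cdot A_{n_T}^{1/2}$ and $\mathcal E_{1,T}(\hat h_S)\leq \mathcal E_{1,T}(h^*_{S,\alpha})+C\cdot A_{n_S}^{1/2\rho(r)}$. For $\hat h_T$: since $h^*_{T,\alpha}$ may not lie in $\hat{\mathcal H}_0$, I would instead use that $h^*_{S,\alpha}\in\hat{\mathcal H}_0$ together with the definition $h^*_{S,\alpha}=\argmax_{h\in S^*_\alpha}R_{\mu_{1,T}}(h)$ — wait, more carefully, I need a feasible comparator with controlled target risk; the cleanest route is to note $\hat h_T$ minimizes $\hat R_{\mu_{1,T}}$ over $\hat{\mathcal H}_0$, apply Lemma \ref{lem_vc} with $h=\hat h_T$, $h'=h^*_{T,\alpha}$ if the latter is (approximately) feasible, or otherwise run the argument through a feasible classifier whose target risk is within $\epsilon_0$-order of optimal; the $\sqrt{\mu_{1,T}(h\neq h')A_{n_T}}$ term is absorbed by $\mathcal E_{1,T}(\hat h_T)$ itself via the inequality $\sqrt{ab}\leq \tfrac12(\eta a+b/\eta)$, yielding a self-bounding recursion that closes to $\mathcal E_{1,T}(\hat h_T)\lesssim A_{n_T}$, hence $\leq C A_{n_T}^{1/2}$ after absorbing constants. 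For $\hat h_S$: apply Lemma \ref{lem_vc} to $\mu_{1,S}$ to get $R_{\mu_{1,S}}(\hat h_S)-R_{\mu_{1,S}}(h^*_{S,\alpha})\lesssim A_{n_S}$ (self-bounding again, using optimality of $\hat h_S$ for $\hat R_{\mu_{1,S}}$ and feasibility of $h^*_{S,\alpha}$); then feed this into the transfer-exponent inequality (\ref{dist}), which is licensed because $\hat h_S\in\hat{\mathcal H}_0\Rightarrow R_{\mu_0}(\hat h_S)\leq\alpha+\epsilon_0=\alpha+r$, to obtain $\bigl(R_{\mu_{1,T}}(\hat h_S)-R_{\mu_{1,T}}(h^*_{S,\alpha})\bigr)^{\rho(r)}\leq C_{\rho(r)}\cdot C'A_{n_S}$, i.e. the target excess error of $\hat h_S$ relative to $h^*_{S,\alpha}$ is $\leq C A_{n_S}^{1/2\rho(r)}$; adding the triangle-inequality loss $\mathcal E_{1,T}(h^*_{S,\alpha})$ gives the stated bound.

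Finally I would analyze the selection rule (\ref{alg}) to show $\hat h$ inherits the \emph{minimum} of the two oracle rates up to constants. Two cases. If the algorithm picks $\hat h=\hat h_S$, then by the rule $\hat R_{\mu_{1,T}}(\hat h_S)-\hat R_{\mu_{1,T}}(\hat h_T)\leq c\sqrt{A_{n_T}}$; combining with Lemma \ref{lem_vc} on $\mu_{1,T}$ (applied to the pair $\hat h_S,\hat h_T$) converts this empirical gap into a population gap $R_{\mu_{1,T}}(\hat h_S)-R_{\mu_{1,T}}(\hat h_T)\lesssim \sqrt{A_{n_T}}+A_{n_T}$, and then $\mathcal E_{1,T}(\hat h_S)\leq \mathcal E_{1,T}(\hat h_T)+O(\sqrt{A_{n_T}})\leq C A_{n_T}^{1/2}$, so $\hat h$ meets the $n_T$-rate; it also meets the $n_S$-rate by the oracle bound above, hence it meets the min. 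If instead $\hat h=\hat h_T$, then $\hat R_{\mu_{1,T}}(\hat h_S)-\hat R_{\mu_{1,T}}(\hat h_T)> c\sqrt{A_{n_T}}$, which — again via Lemma \ref{lem_vc} — forces $R_{\mu_{1,T}}(\hat h_S)-R_{\mu_{1,T}}(\hat h_T)$ to be positive and comparable to the empirical gap, so $\hat h_T$ is genuinely no worse than $\hat h_S$ up to $O(\sqrt{A_{n_T}})$; since $\hat h_T$ also satisfies its own oracle bound $C A_{n_T}^{1/2}$, we again conclude $\mathcal E_{1,T}(\hat h)\leq\min\{\mathcal E_{1,T}(h^*_{S,\alpha})+CA_{n_S}^{1/2\rho(r)},\,CA_{n_T}^{1/2}\}$. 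The main obstacle I anticipate is bookkeeping around the feasibility slack: $h^*_{T,\alpha}$ need not sit in $\hat{\mathcal H}_0$, so the $\hat h_T$ oracle bound must be routed through an $\epsilon_0$-perturbed feasible comparator, and one must check that this perturbation only costs $O(\epsilon_0)=O(A_{n_0}^{1/2})$ in target risk — or, more cleanly, restrict the whole theorem's rate to also absorb an $n_0^{-1/2}$ term, which is harmless since it is dominated by $A_{n_T}^{1/2}$ in the regime of interest. The other delicate point is making the self-bounding applications of Lemma \ref{lem_vc} rigorous, i.e. correctly choosing the free parameter $\eta$ in $\sqrt{ab}\leq\tfrac12(\eta a+\eta^{-1}b)$ so the excess-error term on the right is reabsorbed with coefficient $<1$.
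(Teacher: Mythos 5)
Your overall skeleton matches the paper's proof: the same three events (uniform $\mu_0$ deviations calibrated to $\epsilon_0$, plus Lemma \ref{lem_vc} applied once to $\mu_{1,S}$ and once to $\mu_{1,T}$), the sandwich on $\hat{\mathcal H}_0$, the source oracle bound fed through the transfer-exponent inequality, and an analysis of the selection rule (\ref{alg}). The genuine gap is in your second case, $\hat h=\hat h_T$. There you convert the empirical rejection event $\hat R_{\mu_{1,T}}(\hat h_S)-\hat R_{\mu_{1,T}}(\hat h_T)>c\sqrt{A_{n_T}}$ into the population statement ``$\hat h_T$ is no worse than $\hat h_S$ up to $O(\sqrt{A_{n_T}})$'' and then assert the min-bound. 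That last implication fails: what you get is $\mathcal E_{1,T}(\hat h_T)\leq \mathcal E_{1,T}(\hat h_S)+O(\sqrt{A_{n_T}})$ (even done carefully, an additive $O(A_{n_T})$ residual remains), and this residual cannot be absorbed into $\min\{\mathcal E_{1,T}(h^*_{S,\alpha})+C A_{n_S}^{1/2\rho(r)},\,C A_{n_T}^{1/2}\}$ in the regime where the source-side term is much smaller than $\sqrt{A_{n_T}}$ (e.g.\ $\mathcal E_{1,T}(h^*_{S,\alpha})=0$ and $n_S\gg n_T$), which is precisely the regime the theorem is meant to cover. The paper avoids this by case-splitting on the \emph{population} comparison instead of on the algorithm's choice: if $R_{\mu_{1,T}}(\hat h_S)\leq R_{\mu_{1,T}}(\hat h_T)$, Lemma \ref{lem_vc} forces the acceptance condition to hold, so $\hat h=\hat h_S$ and the source-rate bound (\ref{upp_6.8}) applies directly; if $R_{\mu_{1,T}}(\hat h_S)>R_{\mu_{1,T}}(\hat h_T)$, then whichever hypothesis is returned has target risk at most $R_{\mu_{1,T}}(\hat h_S)$, so (\ref{upp_6.8}) transfers with no residual at all. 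The $n_T$-rate is then obtained separately from the fact that $\hat h$ always satisfies the acceptance constraint, plus the $\hat h_T$ oracle bound. You would need this reorganization (or an explicit inflation of the threshold constant and a careful constant bookkeeping) to actually reach the stated min.

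Two secondary points. First, your ``self-bounding recursion'' claims ($\mathcal E_{1,T}(\hat h_T)\lesssim A_{n_T}$, and the source excess $\lesssim A_{n_S}$) are not justified: the variance term in Lemma \ref{lem_vc} is the disagreement mass $\mu_1(h\neq h')$, which is not dominated by the excess Type-II error, so it cannot be reabsorbed via $\sqrt{ab}\leq\tfrac12(\eta a+b/\eta)$. Fortunately these fast rates are not needed; bounding $\min\{\mu_1(h\neq h'),\hat\mu_1(h\neq h')\}\leq 1$ already gives the $\sqrt{A_{n}}$ bounds the theorem asks for, which is exactly what the paper does. Second, your worry that $h^*_{T,\alpha}\notin\hat{\mathcal H}_0$ contradicts your own sandwich: since $R_{\mu_0}(h^*_{T,\alpha})\leq\alpha$, on the $\mu_0$-event we have $\hat R_{\mu_0}(h^*_{T,\alpha})\leq\alpha+\epsilon_0/2$, so $h^*_{T,\alpha}\in\hat{\mathcal H}_0$; no $\epsilon_0$-perturbed comparator and no extra $n_0^{-1/2}$ term in the target rate are needed.
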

\begin{proof}
Consider the intersection of the following events which happens with probability at least $1-\delta_0-\frac{2\delta}{3}$: 
\begin{itemize}
    \item[1)]$\{\underset{h\in \mathcal{H}}{\sup}|R_{\mu_0}(h)-\hat{R}_{\mu_0}(h)|\leq \frac{\epsilon_0}{2}\}$,
    \item[2)] The event from Lemma \ref{lem_vc} for $n_T$ number of samples generated by $\mu_{1,T}$,
    \item[3)] The event from Lemma \ref{lem_vc} for $n_S$ number of samples generated by $\mu_{1,S}$.
\end{itemize}
 Since on the considered event $h^*_{S,\alpha}\in \hat{\mathcal{H}}_0$, we get $\hat{R}_{\mu_{1,S}}(\hat{h}_S)\leq \hat{R}_{\mu_{1,S}}(h^*_{S,\alpha})$. Then using Lemma \ref{lem_vc} we obtain
\begin{align*}
    R_{\mu_{1,S}}(\hat{h}_S)-R_{\mu_{1,S}}(h^*_{S,\alpha})\leq \hat{R}_{\mu_{1,S}}(\hat{h}_S)-\hat{R}_{\mu_{1,S}}(h^*_{S,\alpha})+C\cdot A_{n_S}^{\frac{1}{2}}\leq C\cdot A_{n_S}^{\frac{1}{2}}.
\end{align*}
On the event $\{\underset{h\in \mathcal{H}}{\sup}|R_{\mu_0}(h)-\hat{R}_{\mu_0}(h)|\leq \frac{\epsilon_0}{2}\}$, we have $R_{\mu_0}(\hat{h}_S)\leq \alpha+r$ which implies that 
\begin{align*}
    R_{\mu_{1,T}}(\hat{h}_S)-R_{\mu_{1,T}}(h^*_{S,\alpha})\leq C\cdot A_{n_S}^{\frac{1}{2\rho(r)}}.
\end{align*}
Hence, 
\begin{align}\label{upp_6.8}
R_{\mu_{1,T}}(\hat{h}_S)-R_{\mu_{1,T}}(h^*_{T,\alpha})&=R_{\mu_{1,T}}(\hat{h}_S)-R_{\mu_{1,T}}(h^*_{S,\alpha})+R_{\mu_{1,T}}(h^*_{S,\alpha})-R_{\mu_{1,T}}(h^*_{T,\alpha})\nonumber\\
&\leq \mathcal{E}_{1,T}(h^*_{S,\alpha})+ C\cdot A_{n_S}^{\frac{1}{2\rho(r)}}.
\end{align}
Now if $R_{\mu_{1,T}}(\hat{h}_S)\leq R_{\mu_{1,T}}(\hat{h}_T)$, by Lemma \ref{lem_vc} the constraint in Algorithm (\ref{alg}) holds, which implies that $\hat{h}=\hat{h}_S$ and the upper bound (\ref{upp_6.8}) holds for $R_{\mu_{1,T}}(\hat{h})-R_{\mu_{1,T}}(h^*_{T,\alpha})$. On the other hand, if $R_{\mu_{1,T}}(\hat{h}_S)> R_{\mu_{1,T}}(\hat{h}_T)$, then $$R_{\mu_{1,T}}(\hat{h}_T)-R_{\mu_{1,T}}(h^*_{T,\alpha})< R_{\mu_{1,T}}(\hat{h}_S)-R_{\mu_{1,T}}(h^*_{T,\alpha}).$$ So, regardless of whether $\hat{h}=\hat{h}_S$ or $\hat{h}= \hat{h}_T$, the upper bound (\ref{upp_6.8}) holds for $R_{\mu_{1,T}}(\hat{h})-R_{\mu_{1,T}}(h^*_{T,\alpha})$.
Moreover, Since $\hat{h}$ satisfies the constraint in Algorithm (\ref{alg}), we get
\begin{align*}
    R_{\mu_{1,T}}(\hat{h})-R_{\mu_{1,T}}(h^*_{T,\alpha})&=R_{\mu_{1,T}}(\hat{h})-R_{\mu_{1,T}}(\hat{h}_T)+R_{\mu_{1,T}}(\hat{h}_T)-R_{\mu_{1,T}}(h^*_{T,\alpha})\\
    &\leq C\cdot A_{n_T}^{\frac{1}{2}}+C\cdot A_{n_T}^{\frac{1}{2}}\leq 2C\cdot A_{n_T}^{\frac{1}{2}}
\end{align*}
which completes the proof.
\end{proof}
\section{Overview of Proof of Theorem \ref{minimax_rate} (Minimax Lower Bound)}
To establish the minimax lower bound it suffices to show the following theorem.
\begin{theorem}\label{thm_sec6}
    Consider the setting of Theorem \ref{minimax_rate}. Then for any learner described in Theorem \ref{minimax_rate} that outputs a hypothesis $\hat{h}$, there exist $\mu_0,\mu_{1,S},\mu_{1,T}\in \mathcal{F}_{\mathcal{H}}(\rho,\alpha,1,\Delta)$ such that
\begin{align*}
\underset{S_{\mu_0},S_{\mu_{1,S}},S_{\mu_{1,T}}}{\mathbb{P}}\bigg(\mathcal{E}_{1,T}(\hat{h})
>c\cdot\min\{\Delta+(\frac{d_{\mathcal{H}}}{n_S})^{\frac{1}{2\rho}},(\frac{d_{\mathcal{H}}}{n_T})^{\frac{1}{2}}\}
\bigg)\geq c'
\end{align*}
where $c,c'$ are universal numerical constants.
\end{theorem}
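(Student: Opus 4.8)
The plan is to reduce the claim to a testing problem over an exponentially large family of hard instances and invoke Assouad's lemma, adapting the transfer lower bound of \cite{hanneke2019value} --- itself an extension of the classical VC lower bound --- to the Neyman--Pearson geometry. The $\min$ in the bound should appear \emph{intrinsically} rather than by casework: I would encode a single hidden parameter $\sigma$, keep $\mu_0$ fixed across the whole family (so the $n_0$ common-class samples carry no information about $\sigma$), and calibrate a per-coordinate ``margin'' parameter $\epsilon$ so that the information contributed by the $n_S$ source samples and by the $n_T$ target samples is \emph{simultaneously} $O(1)$ exactly at the scale $\epsilon\asymp\min\{\Delta+(d_{\mathcal H}/n_S)^{\frac{1}{2\rho}},(d_{\mathcal H}/n_T)^{\frac{1}{2}}\}$. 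The expected excess error then cannot beat $\epsilon$.

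\emph{The family.} Using $d_{\mathcal H}\ge3$, fix $d_{\mathcal H}$ small disjoint cells shattered by $\mathcal H$. Devote a constant number of them to pinning $\alpha$: one cell carries almost all the mass of $\mu_0$ and must be labelled $0$ by any feasible classifier (here $\alpha<\tfrac12$ is used), and, when $\Delta>0$, one cell realizes a rigid discrepancy between source and target optima so that $\mathcal E_{1,T}(h^*_{S,\alpha})=\Theta(\Delta)\le\Delta$. On each of the remaining $k\asymp d_{\mathcal H}$ cells $B_i$ put mutually absolutely continuous densities with $\mu_0(B_i)\asymp\alpha/d_{\mathcal H}$ --- which is why the feasibility slack $2\alpha/d_{\mathcal H}$ enters, since fewer than two cells' worth of budget can be ``bought back'' --- and a ``fuzzy zone'' inside $B_i$ on which $p_{1,T}/p_0=\lambda(1+\sigma_i\epsilon\, s_i(x))$ and $p_{1,S}/p_0=\lambda(1+\sigma_i\epsilon^{\rho}\, s_i(x))$ for a fixed sign pattern $s_i$, where $\lambda$ is the level with $\mu_0(\mathcal L^S_\lambda)=\mu_0(\mathcal L^T_\lambda)=\alpha$ (so $\alpha$ is achievable) and $h^*_{T,\alpha,\sigma}$, $h^*_{S,\alpha,\sigma}$ agree with a fixed center classifier off $\bigcup_i B_i$ and on each $B_i$ include exactly the half of its fuzzy zone dictated by $\sigma_i$. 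Two bookkeeping facts then hold: (i) misconfiguring the boundary inside one cell changes the target Type-II excess by $\asymp\epsilon/d_{\mathcal H}$ and the source Type-II excess by $\asymp\epsilon^{\rho}/d_{\mathcal H}$, so for every feasible $h\in\mathcal H$ the outlier transfer exponent of Definition \ref{def_dis} is at most $\rho$ with coefficient $1$ (the required power inequality $(j\epsilon/d_{\mathcal H})^{\rho}\le j\epsilon^{\rho}/d_{\mathcal H}$ holds because $j\le d_{\mathcal H}$), uniformly over $\{h:R_{\mu_0}(h)\le\alpha+r\}$ for every $r<2\alpha/d_{\mathcal H}$; and (ii) flipping $\sigma_i$ perturbs the per-sample target law by relative $\asymp\epsilon$, and the per-sample source law by relative $\asymp\epsilon^{\rho}$, each on a set of mass $\asymp1/d_{\mathcal H}$, so the one-coordinate KL divergences are $\asymp\epsilon^{2}/d_{\mathcal H}$ (target) and $\asymp\epsilon^{2\rho}/d_{\mathcal H}$ (source). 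One checks each instance of this family lies in $\mathcal F_{\mathcal H}(\rho,\alpha,1,\Delta)$.

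\emph{The lower bound.} By a routine reduction it suffices to treat learners that output a feasible $\hat h$ (i.e.\ $\mu_0(\hat h)\le\alpha+2\alpha/d_{\mathcal H}$) with probability one, the $\delta_0$-slack being absorbed. Feasibility forces $\hat h$ to match the center classifier off $\bigcup_i B_i$ and to misconfigure at most a constant number of cells ``for free'', so $\mathcal E_{1,T}(\hat h)\gtrsim(\epsilon/d_{\mathcal H})\cdot\#\{i:\hat h\text{ misconfigures }B_i\}$ (plus a $\Theta(\Delta)$ contribution from the bias cell whenever $\Delta$ is the active term). Then Assouad across the $k$ coordinates: the one-coordinate testing error is bounded below by a constant provided the total one-coordinate divergence $n_0\cdot 0+n_S\cdot\Theta(\epsilon^{2\rho}/d_{\mathcal H})+n_T\cdot\Theta(\epsilon^{2}/d_{\mathcal H})$ is at most $\tfrac14$, which holds once $\epsilon\le c\min\{(d_{\mathcal H}/n_S)^{\frac{1}{2\rho}},(d_{\mathcal H}/n_T)^{\frac{1}{2}}\}$ for a small numerical $c$. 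Hence the $\sigma$-average of the expected number of misconfigured cells is $\gtrsim d_{\mathcal H}$, so for some $\sigma$ we get $\mathbb{E}[\mathcal E_{1,T}(\hat h)]\gtrsim\epsilon$; since $\mathcal E_{1,T}(\hat h)\le O(\epsilon)$ for every feasible $\hat h$ in this family, a reverse-Markov inequality upgrades this to $\mathbb{P}(\mathcal E_{1,T}(\hat h)>c'\epsilon)\ge c''$ for universal $c',c''$. Taking $\epsilon=c\min\{\Delta+(d_{\mathcal H}/n_S)^{\frac{1}{2\rho}},(d_{\mathcal H}/n_T)^{\frac{1}{2}}\}$ (and invoking the bias cell when the $\Delta$ term dominates) yields Theorem \ref{thm_sec6}.

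\emph{Main obstacle.} The real work is the simultaneous calibration inside the Neyman--Pearson geometry, not the Assouad bookkeeping. The densities on each cell must be chosen so that: (a) a single level $\lambda$ makes $\mu_0(\mathcal L^S_\lambda)=\mu_0(\mathcal L^T_\lambda)=\alpha$ and the induced optima form a clean hypercube --- this is where $\alpha$-achievability and the a.s.\ uniqueness of the $\alpha$-level set (Remark \ref{rem1.3}) enter, and where one must work with $\mathcal U^{*}$-type classifiers to avoid the disjoint-support pathology; (b) misconfiguring a cell is cheap in excess Type-II error ($\asymp\epsilon/d_{\mathcal H}$) yet statistically $\Theta(1)$-hard to detect --- the ``fuzzy zone'' device, engineered so that excess error scales like the density-ratio margin while divergence scales like its square, and so that the source margin is exactly the $\rho$-th power of the target margin, mirroring the Example after Definition \ref{def_dis}; and (c) the bias $\mathcal E_{1,T}(h^*_{S,\alpha})$ reaches $\Delta$ without the source solution set $S^*_{\alpha}$ growing so large that its target-worst element (the one Definition \ref{def_dis} designates as $h^*_{S,\alpha}$) overshoots $\Delta$. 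I expect the most delicate point to be verifying the transfer-exponent inequality with coefficient $1$ uniformly over \emph{all} feasible $h$, including those with $R_{\mu_0}(h)$ strictly below $\alpha$, where the budget argument relating excess Type-II error to symmetric differences (and hence the combination of fuzzy and non-fuzzy misconfigurations) needs care.
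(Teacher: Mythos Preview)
Your use of Assouad in place of the paper's Tsybakov-plus-Gilbert--Varshamov packing is fine and essentially equivalent here; the substantive difference is your attempt to avoid casework and extract the full rate $\min\{\Delta+(d_{\mathcal H}/n_S)^{1/(2\rho)},(d_{\mathcal H}/n_T)^{1/2}\}$ from a single family. That attempt has a real gap.

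The problem is the role of your ``bias cell.'' You describe it as a cell on which the source and target optima differ by a \emph{fixed} amount $\Theta(\Delta)$, and you then add ``a $\Theta(\Delta)$ contribution from the bias cell'' to the lower bound. But in a minimax argument the learner knows the entire family: since the bias cell is deterministic (independent of the hidden parameter $\sigma$), the learner simply sets $\hat h$ to the target-optimal value on that cell and incurs zero target excess there. Only cells on which the \emph{target optimum depends on $\sigma$} can contribute. So your construction delivers at most the fuzzy-cell contribution, and the Assouad divergence condition you yourself wrote --- $n_S\cdot\Theta(\epsilon^{2\rho}/d_{\mathcal H})+n_T\cdot\Theta(\epsilon^{2}/d_{\mathcal H})\le \tfrac14$ --- forces $\epsilon\lesssim\min\{(d_{\mathcal H}/n_S)^{1/(2\rho)},(d_{\mathcal H}/n_T)^{1/2}\}$, not $\min\{\Delta+(d_{\mathcal H}/n_S)^{1/(2\rho)},(d_{\mathcal H}/n_T)^{1/2}\}$. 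Concretely, in the regime $(d_{\mathcal H}/n_S)^{1/(2\rho)}\ll\Delta\le (d_{\mathcal H}/n_T)^{1/2}$ the correct lower bound is $\asymp\Delta$, but your source --- being $\sigma$-dependent on every fuzzy cell --- reveals $\sigma$ from the $n_S$ samples and caps the achievable $\epsilon$ at $(d_{\mathcal H}/n_S)^{1/(2\rho)}\ll\Delta$. Your final sentence, which plugs $\epsilon=c\min\{\Delta+(d_{\mathcal H}/n_S)^{1/(2\rho)},(d_{\mathcal H}/n_T)^{1/2}\}$ into a divergence bound that only tolerates $\epsilon\le c\min\{(d_{\mathcal H}/n_S)^{1/(2\rho)},(d_{\mathcal H}/n_T)^{1/2}\}$, is where this inconsistency surfaces.

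The paper handles this by explicit casework, first observing $\min\{\Delta+a,b\}\le 2\max\{\min\{a,b\},\min\{\Delta,b\}\}$ and then using two \emph{different} families: one where source and target are both $\sigma$-dependent (yielding $\min\{a,b\}$, essentially your fuzzy-cell construction), and a second where the source distribution is \emph{identical across all $\sigma$} --- so the $n_S$ source samples carry zero information --- while the target varies with $\sigma$ at margin $\epsilon=c\min\{\Delta,(d_{\mathcal H}/n_T)^{1/2}\}$. In that second family the (fixed) source optimum has target excess automatically $\le\epsilon/2\le\Delta$, so the $\mathcal F_{\mathcal H}(\rho,\alpha,1,\Delta)$ membership holds without any dedicated ``bias cell.'' The two regimes require incompatible constructions --- one needs the source to be $\sigma$-informative, the other needs it to be $\sigma$-uninformative --- and no single family can be both, which is why the ``intrinsic $\min$'' program cannot succeed as stated.
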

To prove Theorem \ref{thm_sec6}, We follow Tysbakov's method \citep{Tsybakov:1315296}.
\begin{theorem}\citep{Tsybakov:1315296}\label{tsybakov}
Assume that $M\geq 2$ and the function $\text{dist}(\cdot, \cdot)$ is a semi-metric. Furthermore, suppose that $\{\Pi_{\theta_j}\}_{\theta_j\in \Theta}$ is a family of distributions indexed over a parameter space, $\Theta$,  and $\Theta$ contains elements $\theta_0, \theta_1,..., \theta_M$ such that:
\begin{enumerate}
    \item [(i)] $\text{dist}(\theta_i,\theta_j)\geq 2s>0, \ \ \forall \ 0\leq i<j\leq M$
    
    \item [(ii)] $\Pi_j\ll \Pi_0, \ \ \forall \ j=1,...,M,$ and $\frac{1}{M}\sum_{j=1}^{M} \mathcal{D}_{kl}(\Pi_j|\Pi_0)\leq \gamma \log{M}$ with $0<\gamma<1/8$ and $\Pi_j=\Pi_{\theta_j}$, $j=0,1,...,M$ and $\mathcal{D}_{kl}$ denotes the KL-divergence.
    \end{enumerate}
    Then
    \begin{align*}
\inf_{\hat{\theta}}\sup_{\theta\in \Theta}\Pi_{\theta}&(\text{dist}(\hat{\theta},\theta)\geq s)\geq \frac{\sqrt{M}}{1+\sqrt{M}}\big(1-2\gamma-\sqrt{\frac{2\gamma}{\log{M}}} \big).
    \end{align*}

\end{theorem}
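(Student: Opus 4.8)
The plan is to follow the classical Le Cam--Fano reduction underlying Tsybakov's method, in two stages: first reduce the estimation problem (controlling $\Pi_\theta(\text{dist}(\hat\theta,\theta)\ge s)$) to an $(M+1)$-ary hypothesis testing problem among $\Pi_0,\dots,\Pi_M$, and then lower bound the minimax testing error using the average Kullback--Leibler bound of hypothesis (ii). The separation condition (i) feeds the first stage, and the KL condition feeds the second.

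\textbf{Stage 1 (reduction to testing).} Given any estimator $\hat\theta$, I would introduce the minimum-distance test $\psi=\argmin_{0\le k\le M}\text{dist}(\hat\theta,\theta_k)$, with ties broken arbitrarily. I would then show $\{\psi\ne j\}\subseteq\{\text{dist}(\hat\theta,\theta_j)\ge s\}$: if $\text{dist}(\hat\theta,\theta_j)<s$, then for every $k\ne j$ the triangle inequality for the semi-metric together with (i) gives $\text{dist}(\hat\theta,\theta_k)\ge \text{dist}(\theta_j,\theta_k)-\text{dist}(\hat\theta,\theta_j)>2s-s=s$, forcing $\psi=j$. Hence $\sup_{\theta\in\Theta}\Pi_\theta(\text{dist}(\hat\theta,\theta)\ge s)\ge\max_{0\le j\le M}\Pi_j(\psi\ne j)$, and taking $\inf_{\hat\theta}$ reduces everything to lower bounding $p_{e,M}:=\inf_\psi\max_{0\le j\le M}\Pi_j(\psi\ne j)$ over all tests $\psi$ valued in $\{0,\dots,M\}$.

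\textbf{Stage 2 (testing lower bound via likelihood ratios).} Fix a test $\psi$, set $p=\max_{0\le j\le M}\Pi_j(\psi\ne j)$ and $L_j=d\Pi_j/d\Pi_0$ (well-defined since $\Pi_j\ll\Pi_0$). Since $\Pi_j(\psi=j)\ge 1-p$ for each $j$, summation gives the lower bound $\sum_{j=1}^M\Pi_j(\psi=j)\ge M(1-p)$. For a matching upper bound I would change measure to $\Pi_0$ and truncate at $\tau=\sqrt M$: writing $\Pi_j(\psi=j)=\int_{\{\psi=j\}}L_j\,d\Pi_0$ and splitting on $\{L_j\le\tau\}$ versus $\{L_j>\tau\}$ yields $\Pi_j(\psi=j)\le\tau\,\Pi_0(\psi=j)+\int_{\{L_j>\tau\}}L_j\,d\Pi_0$. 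Summing, using disjointness of the $\{\psi=j\}$ together with $\sum_{j\ge1}\Pi_0(\psi=j)=\Pi_0(\psi\ne0)\le p$, gives $M(1-p)\le\tau p+\sum_{j=1}^M\int_{\{L_j>\tau\}}L_j\,d\Pi_0$. For the tail term I would use that on $\{L_j>\tau\}$ one has $L_j\le L_j\log L_j/\log\tau$, whence $\int_{\{L_j>\tau\}}L_j\,d\Pi_0\le\frac{1}{\log\tau}\big(\mathcal{D}_{kl}(\Pi_j|\Pi_0)+1/e\big)$, the $1/e$ coming from the elementary bound $x|\log x|\le 1/e$ on $\{L_j<1\}$ and the identity $\int L_j\log L_j\,d\Pi_0=\mathcal{D}_{kl}(\Pi_j|\Pi_0)$. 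Invoking $\frac1M\sum_j\mathcal{D}_{kl}(\Pi_j|\Pi_0)\le\gamma\log M$ with $\tau=\sqrt M$ bounds the averaged tail by $2\gamma+O(1/\log M)$, and rearranging $M(1-p)\le\sqrt M\,p+M(2\gamma+o(1))$ yields $p\ge\frac{\sqrt M}{1+\sqrt M}\big(1-2\gamma-\sqrt{2\gamma/\log M}\big)$, which is the asserted bound after combining the two stages.

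\textbf{Main obstacle.} Stage 1 is routine once the minimum-distance test is set up. The crux is the tail control in Stage 2: bounding $\sum_j\int_{\{L_j>\tau\}}L_j\,d\Pi_0$ using only the \emph{average} KL constraint rather than individual bounds, while handling the fact that $\log L_j$ is not nonnegative---which rules out a naive Markov inequality and forces the $x\log x$ truncation on $\{L_j<1\}$. The choice $\tau=\sqrt M$ is precisely what converts the $M$ alternatives into the factor $\frac{\sqrt M}{1+\sqrt M}$, and a careful optimization of the threshold (or a sharper tail estimate) is what is needed to recover exactly the correction $\sqrt{2\gamma/\log M}$ rather than the cruder $O(1/\log M)$ term that the simplest truncation produces.
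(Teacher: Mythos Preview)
The paper does not prove this theorem; it is quoted verbatim from Tsybakov's monograph and invoked as a black-box tool in the proof of the minimax lower bound. There is thus no ``paper's own proof'' to compare against.

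That said, your two-stage outline is the standard argument behind this result and is essentially correct. Stage~1 (reduction to $(M{+}1)$-ary testing via the minimum-distance rule and the triangle inequality) is exactly how the reduction is carried out. In Stage~2 your change of measure to $\Pi_0$, the truncation at $\tau=\sqrt{M}$, and the control of $\int_{\{L_j>\tau\}}L_j\,d\Pi_0$ via $\frac{1}{\log\tau}\big(\mathcal{D}_{kl}(\Pi_j|\Pi_0)+1/e\big)$ are all valid. As you yourself flag, this route yields a correction of order $1/\log M$ rather than the exact $\sqrt{2\gamma/\log M}$ appearing in the statement; matching Tsybakov's precise constant requires a slightly different tail estimate (in his proof the key step bounds $\Pi_j\big(\tfrac{1}{M}\sum_k L_k/L_j \ge \tau\big)$ rather than $\Pi_0(L_j>\tau)$ directly). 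This discrepancy is cosmetic for the purposes of the present paper, since only the qualitative conclusion---a constant lower bound on the minimax probability once $\gamma<1/8$---is used downstream.
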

We also utilize the following proposition for constructing a packing of the parameter space.
\begin{proposition}\label{gilbert}(Gilbert-Varshamov bound) Let $d \geq 8$. Then there exists a subset $\{\sigma_0,...,\sigma_M\}$ of $\{-1,+1\}^d$ such that $\sigma_0=(1,1,...,1)$,
$$\text{dist}(\sigma_j,\sigma_k)\geq \frac{d}{8}, \ \ \forall \ 0\leq j<k\leq M \ \text{and} \ M\geq 2^{d/8},$$
where $\text{dist}(\sigma, \sigma') =\text{card}({i \in [m] : \sigma(i)\neq \sigma'(i)})$ is the Hamming distance.
\end{proposition}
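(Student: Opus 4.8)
The plan is the classical greedy packing argument combined with a volume (sphere‑covering) bound on Hamming balls.

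\textbf{Step 1 (greedy construction).} I would build the code greedily. Initialize $C \leftarrow \{\sigma_0\}$ with $\sigma_0 = (1,\dots,1)$, and then repeatedly add to $C$ any point $\sigma \in \{-1,+1\}^d$ satisfying $\text{dist}(\sigma,\sigma') \ge d/8$ for every $\sigma'$ already in $C$, stopping when no such point remains. Writing $C = \{\sigma_0,\dots,\sigma_M\}$ for the terminal set, the requirement $\text{dist}(\sigma_j,\sigma_k)\ge d/8$ for $0\le j<k\le M$ holds by construction, and $\sigma_0=(1,\dots,1)$ by initialization. Everything then reduces to lower bounding $M$.

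\textbf{Step 2 (maximality $\Rightarrow$ covering).} Here I would use that $C$ is maximal: if some $x\in\{-1,+1\}^d$ were at Hamming distance $\ge d/8$ from every codeword, the procedure would not have stopped; hence every $x$ lies in the closed Hamming ball of radius $r:=\lceil d/8\rceil-1$ around some $\sigma_j$. So the $M+1$ balls of radius $r$ cover the cube, giving $(M+1)\cdot V \ge 2^d$, where $V=\sum_{i=0}^{r}\binom{d}{i}\le \sum_{i=0}^{\lfloor d/8\rfloor}\binom{d}{i}$ is the ball volume (using $r\le\lfloor d/8\rfloor$).

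\textbf{Step 3 (volume bound and conclusion).} I would bound $V$ by the standard entropy estimate: for $\lambda\le 1/2$, $\sum_{i=0}^{\lfloor\lambda d\rfloor}\binom{d}{i}\le 2^{dH_2(\lambda)}$, where $H_2(\lambda)=-\lambda\log_2\lambda-(1-\lambda)\log_2(1-\lambda)$. This follows in one line from $1=\sum_{i=0}^d\binom{d}{i}\lambda^i(1-\lambda)^{d-i}$ by discarding the terms with $i>\lambda d$ and using $\lambda^i(1-\lambda)^{d-i}\ge \lambda^{\lambda d}(1-\lambda)^{(1-\lambda)d}=2^{-dH_2(\lambda)}$ on the range $i\le\lambda d\le d/2$. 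Taking $\lambda=1/8$ one computes $H_2(1/8)=\tfrac38+\tfrac78\log_2\tfrac87<\tfrac78$, so $V\le 2^{dH_2(1/8)}$ and
$$M+1\ \ge\ \frac{2^d}{V}\ \ge\ 2^{d(1-H_2(1/8))}\ \ge\ 2^{d/8},$$
the last inequality because $1-H_2(1/8)\approx 0.456>1/8$; for $d\ge 8$ the slack in this strict inequality absorbs the ``$+1$'', so $M\ge 2^{d/8}$, and in particular $M\ge 2$ as required by Theorem \ref{tsybakov}.

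\textbf{Main obstacle.} There is no serious difficulty — this is a textbook estimate — but the one point that must be gotten right is the constant: one has to verify that the prescribed minimum distance $d/8$ is small enough that the resulting covering exponent $1-H_2(1/8)$ still exceeds the target exponent $1/8$, which is precisely where the specific fraction $1/8$ (rather than a larger one) is needed. An alternative to the entropy inequality is the probabilistic-method deletion argument (sample $N$ points i.i.d.\ uniformly, bound the expected number of pairs at distance $<d/8$, and delete one point from each), but the greedy covering bound above is the cleanest.
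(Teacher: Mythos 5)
Your proof is correct. Note that the paper does not actually prove this proposition: it is quoted as the classical Gilbert--Varshamov (Varshamov--Gilbert) bound, implicitly from the same source as Theorem \ref{tsybakov} (Tsybakov's book, Lemma 2.9), whose textbook proof is the same maximal-packing-plus-volume argument you give, with the Hamming-ball volume bounded by a Chernoff/Hoeffding estimate instead of the entropy inequality $\sum_{i\le \lambda d}\binom{d}{i}\le 2^{dH_2(\lambda)}$; the two bounds are interchangeable here. Your bookkeeping is the part that matters and it checks out: $r=\lceil d/8\rceil-1\le\lfloor d/8\rfloor$, $H_2(1/8)\approx 0.544$ so $1-H_2(1/8)\approx 0.456>1/8$, and for $d\ge 8$ one has $2^{(1-H_2(1/8))d}\ge 2\cdot 2^{d/8}$, which absorbs the $+1$ and also gives $M\ge 2^{d/8}\ge 2$ as needed for Theorem \ref{tsybakov}.
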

First note that
\begin{align*}
    \min\{\Delta+(\frac{d_{\mathcal{H}}}{n_S})^{\frac{1}{2\rho}},(\frac{d_{\mathcal{H}}}{n_T})^{\frac{1}{2}}\}&\leq 2\cdot \min\{\max\{\Delta,(\frac{d_{\mathcal{H}}}{n_S})^{\frac{1}{2\rho}}\},(\frac{d_{\mathcal{H}}}{n_T})^{\frac{1}{2}}\}\\
    &=2\cdot \max\{\min\{(\frac{d_{\mathcal{H}}}{n_S})^{\frac{1}{2\rho}},(\frac{d_{\mathcal{H}}}{n_T})^{\frac{1}{2}}\},\min\{\Delta,(\frac{d_{\mathcal{H}}}{n_T})^{\frac{1}{2}}\}\}
\end{align*}
So it suffices to show that the minimax lower bound is larger than both $\min\{(\frac{d_{\mathcal{H}}}{n_S})^{\frac{1}{2\rho}},(\frac{d_{\mathcal{H}}}{n_T})^{\frac{1}{2}}\}$ and $\min\{\Delta,(\frac{d_{\mathcal{H}}}{n_T})^{\frac{1}{2}}\}$.
We divide the proof into three parts:
\begin{itemize}
    \item Minimax lower bound is larger than $\min\{(\frac{d_{\mathcal{H}}}{n_S})^{\frac{1}{2\rho}},(\frac{d_{\mathcal{H}}}{n_T})^{\frac{1}{2}}\}$ for $d_{\mathcal{H}}\geq 17$ (see Section \ref{lower_bounsd_sec1}).
    \item Minimax lower bound is larger than $\min\{(\frac{d_{\mathcal{H}}}{n_S})^{\frac{1}{2\rho}},(\frac{d_{\mathcal{H}}}{n_T})^{\frac{1}{2}}\}$ for $16\geq d_{\mathcal{H}}\geq 3$ (see Section \ref{lower_bounsd_sec2}).
    \item Minimax lower bound is larger than $\min\{\Delta,(\frac{d_{\mathcal{H}}}{n_T})^{\frac{1}{2}}\}$ (see Section \ref{lower_bounsd_sec3}).
\end{itemize}
In each part, following Theorem \ref{tsybakov}, we construct a family of pairs of source and target distributions that belong to the class $\mathcal{F}_{\mathcal{H}}$. To accomplish this, we pick some points from the domain $\mathcal{X}$ shattered by the hypothesis class $\mathcal{H}$ and then define appropriate distributions on these points. Additionally, this family of distributions is indexed by $\{-1,+1\}^{d_{\mathcal{H}}}$, which can be treated as a metric space using the Hamming distance. To meet the requirement of condition (i) in Theorem \ref{tsybakov}, it is necessary for these indices to be well-separated, a condition that can be satisfied through utilizing Proposition \ref{gilbert}. See Appendix \ref{app_C} for the complete proof.

\newpage

\bibliography{iclr2024_conference}
\bibliographystyle{iclr2024_conference}
\newpage
\appendix
\section{Appendix A (Equivalence of Population Problems)}\label{APP_A}
We begin by stating Neyman-Pearson Lemma \citep{lehmann1986testing} for deterministic tests. In the following, a classifier $h:\cal X\rightarrow$ $\{0,1\}$ aims at classifying $H_0:\mu_0$ against the alternative $H_1:\mu_1$. In the context of hypothesis testing, $h$ is called a deterministic test. Moreover, $R_{\mu_0}(h)$ and $1-R_{\mu_1}(h)$ are called \textit{size} and \textit{power}, respectively.
\begin{theorem}[Neyman-Pearson Lemma \citep{lehmann1986testing}]\label{thm1}
Let $\mu_0$ and $\mu_1$ be probability distributions possessing densities $p_0$ and $p_1$ respectively with respect to a dominating measure $\nu$.

(i) \emph{Sufficient condition for a solution of the optimization problem (\ref{eq0})}. Let $h$ be a classifier for $H_0 : \mu_0$ against the alternative $H_1 : \mu_1$ such that for a constant $\lambda$ the followings hold
\begin{align}\label{neym1}
R_{\mu_0}(h)=\alpha    
\end{align}
and
\begin{align}\label{neym2}
    h(x)=
    \begin{cases}
        1 & \text{when }\ \ p_1(x) \geq\lambda p_0(x)\\
        0 & \text{when }\ \ p_1(x) <\lambda p_0(x)
    \end{cases}
\end{align}
Then $h$ is a solution of (\ref{eq0}).

(ii) \emph{Necessary condition for a solution of the optimization problem (\ref{eq0})}. Suppose that there exist a classifier $h$ and a constant $\lambda$ such that (\ref{neym1}) and (\ref{neym2}) hold. Then, any solution of (\ref{eq0}), denoted by $h^*$, satisfies the following a.s. $\nu$
\begin{align}\label{neym3}
    h^*(x)=
    \begin{cases}
        1 & \text{when }\ \ p_1(x) >\lambda p_0(x)\\
        0 & \text{when }\ \ p_1(x) <\lambda p_0(x)
    \end{cases}
\end{align}
$h^*$ also satisfies $R_{\mu_0}(h^*)=\alpha$ unless there exists a classifier $h'$ with $R_{\mu_0}(h')<\alpha$ and $R_{\mu_1}(h')=0$.
\end{theorem}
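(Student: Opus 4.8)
The plan is to reproduce the classical variational argument, whose single engine is the pointwise inequality
$$\big(h(x)-g(x)\big)\big(p_1(x)-\lambda p_0(x)\big)\ \ge\ 0 \qquad \text{for }\nu\text{-a.e. }x,$$
valid for \emph{every} classifier $g$ whenever $h$ has the threshold form in \eqref{neym2}: on $\{h=1\}=\{p_1\ge\lambda p_0\}$ both factors are $\ge 0$, and on $\{h=0\}=\{p_1<\lambda p_0\}$ both are $\le 0$. One preliminary reduction: we may assume $\lambda\ge 0$, since if $\lambda<0$ then $p_1\ge 0\ge\lambda p_0$ everywhere, so \eqref{neym2} gives $h\equiv 1$ and $R_{\mu_0}(h)=1$, which together with \eqref{neym1} forces $\alpha=1$, a degenerate case in which $h\equiv 1$ is trivially optimal.

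For part (i), integrate the displayed quantity against $\nu$ with $g=h'$ an arbitrary feasible classifier, $R_{\mu_0}(h')\le\alpha$. Using $\int h\,p_1\,d\nu=1-R_{\mu_1}(h)$ and $\int h\,p_0\,d\nu=R_{\mu_0}(h)$ (and likewise for $h'$) yields
$$0\ \le\ \big[R_{\mu_1}(h')-R_{\mu_1}(h)\big]\ -\ \lambda\big[R_{\mu_0}(h)-R_{\mu_0}(h')\big].$$
Since $R_{\mu_0}(h)=\alpha\ge R_{\mu_0}(h')$ and $\lambda\ge 0$, the subtracted term is nonnegative, hence $R_{\mu_1}(h)\le R_{\mu_1}(h')$; as $h'$ ranged over all feasible classifiers, $h$ solves \eqref{eq0}.

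For part (ii), let $h^*$ be any solution of \eqref{eq0}. By part (i), $h$ is a solution too, so $R_{\mu_1}(h^*)=R_{\mu_1}(h)$, while feasibility gives $R_{\mu_0}(h^*)\le\alpha=R_{\mu_0}(h)$. Running the same computation with $g=h^*$ gives $\int(h-h^*)(p_1-\lambda p_0)\,d\nu=-\lambda\big[\alpha-R_{\mu_0}(h^*)\big]\le 0$, yet the integrand is $\ge 0$ $\nu$-a.e.; hence $(h-h^*)(p_1-\lambda p_0)=0$ $\nu$-a.e. On $\{p_1>\lambda p_0\}$ this forces $h^*=h=1$ and on $\{p_1<\lambda p_0\}$ it forces $h^*=h=0$, i.e.\ \eqref{neym3} holds a.e.\ $\nu$. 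For the final clause, if $R_{\mu_0}(h^*)<\alpha$ then $-\lambda[\alpha-R_{\mu_0}(h^*)]=0$ with $\alpha-R_{\mu_0}(h^*)>0$ forces $\lambda=0$; then \eqref{neym3} gives $h^*=1$ wherever $p_1>0$, so $\{h^*=0\}\subseteq\{p_1=0\}$ up to a $\nu$-null set and therefore $R_{\mu_1}(h^*)=\mu_1(h^*=0)\le\mu_1(\{p_1=0\})=0$. Thus $h^*$ is itself a classifier with size $<\alpha$ and power $1$, which is exactly the stated exception; contrapositively, $R_{\mu_0}(h^*)=\alpha$ unless such a classifier exists.

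I expect no substantive obstacle: the bulk is the bookkeeping translating $\nu$-integrals of $h\,p_i$ into the error functionals $R_{\mu_i}$. The only care needed is in the corner cases — the sign normalization $\lambda\ge 0$, and the $\lambda=0$ situation in the last clause, where one must note both that $h^*$ is unconstrained on $\{p_1=\lambda p_0\}$ and that $\mu_1(\{p_1=0\})=\int_{\{p_1=0\}}p_1\,d\nu=0$ — together with the standing interpretation of "solution" as a classifier attaining the constrained minimum of $R_{\mu_1}$.
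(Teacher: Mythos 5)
Your argument is correct, and it is essentially the classical variational proof from Lehmann: the paper itself states this lemma as a cited result (Lehmann) without reproducing a proof, so there is nothing in the paper to diverge from, and your integration of $(h-g)(p_1-\lambda p_0)\ge 0$ is exactly the standard route, including the correct handling of the final clause via $\lambda=0$. One small point: your reduction to $\lambda\ge 0$ is stated only before part (i), but part (ii) also uses $\lambda\ge 0$ (in the sign of $-\lambda[\alpha-R_{\mu_0}(h^*)]$); this is harmless since for $\alpha<1$ the hypotheses (\ref{neym1})--(\ref{neym2}) already force $\lambda\ge 0$ (a $\lambda<0$ threshold rule is identically $1$ and has size $1$), but it is worth saying explicitly, as the degenerate $\alpha=1$, $\lambda<0$ corner is the one place where the necessity statement should simply be read with the usual convention $\lambda\ge 0$.
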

\begin{remark}
To obtain a solution of the optimization problem (\ref{eq1})
 in the source (or (\ref{eq2}) in the target) it suffices to take the level set $\mathcal{L}_{\lambda}^{S}$ ($\mathcal{L}_{\lambda'}^{T}$ in the target) whose measure under $\mu_0$ is $\alpha$, and then define a classifier $h$ as $h=\mathbb{1}_{\mathcal{L}_{\lambda}^{S}}$ in the source ($h=\mathbb{1}_{\mathcal{L}_{\lambda'}^{T}}$ in the target). Obviously, the classifier $h$ satisfies (\ref{neym1}) and (\ref{neym2}), and therefore it is a solution of (\ref{eq1}).
\end{remark}

\begin{proposition}\label{pro_ney}
    Suppose that there exist a classifier $h$ with $h\equiv 1$ on $\{x:p_0(x)=0\}$ a.s. $\nu$ and a constant $\lambda$ such that (\ref{neym1}) and (\ref{neym2}) hold for $h$. Then any solution of (\ref{eq0}), denoted by $h^*$, such that $h^*\equiv 1$ on $\{x:p_0(x)=0\}$ a.s. $\nu$ and $R_{\mu_0}(h^*)=\alpha$ satisfies (\ref{neym2}) a.s. $\nu$.
\end{proposition}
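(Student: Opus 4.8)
The plan is to view this proposition as a \emph{sharpening of the necessity direction} of the Neyman--Pearson Lemma (Theorem \ref{thm1}(ii)). That result already forces any solution $h^*$ to agree with the level-set classifier a.s.\ $\nu$ on $\{p_1\neq\lambda p_0\}$, i.e.\ to satisfy the strict-inequality version \eqref{neym3}; so the only thing left to establish is that $h^*=1$ a.s.\ $\nu$ on the \emph{indifference region} $B:=\{x:p_1(x)=\lambda p_0(x)\}$, which is exactly where the classical lemma says nothing. The two extra hypotheses — $R_{\mu_0}(h^*)=\alpha$ exactly, and $h^*\equiv 1$ on $\{p_0=0\}$ a.s.\ $\nu$ — are what make this possible.

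\textbf{Step 1 (classical comparison).} Since $h$ satisfies \eqref{neym1}--\eqref{neym2}, Theorem \ref{thm1}(i) makes $h$ a solution of \eqref{eq0}, hence $R_{\mu_1}(h)=R_{\mu_1}(h^*)$. First I would consider the integrand $g:=(h-h^*)(p_1-\lambda p_0)$. By \eqref{neym2} it is pointwise $\geq 0$: on $\{p_1>\lambda p_0\}$ one has $h=1\geq h^*$; on $\{p_1<\lambda p_0\}$ one has $h=0\leq h^*$; on $B$ it vanishes. Integrating against $\nu$ and using $\int h p_1\,d\nu=1-R_{\mu_1}(h)$, $\int hp_0\,d\nu=R_{\mu_0}(h)$ (and likewise for $h^*$),
\begin{align*}
\int g\,d\nu=\big(R_{\mu_1}(h^*)-R_{\mu_1}(h)\big)-\lambda\big(R_{\mu_0}(h)-R_{\mu_0}(h^*)\big)=0-\lambda(\alpha-\alpha)=0,
\end{align*}
using \eqref{neym1} for $h$ and $R_{\mu_0}(h^*)=\alpha$. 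A nonnegative measurable function with zero $\nu$-integral vanishes a.s.\ $\nu$, so $g=0$ a.s.\ $\nu$, and therefore $h=h^*$ a.s.\ $\nu$ on $\{p_1\neq\lambda p_0\}$; this recovers \eqref{neym3} for $h^*$.

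\textbf{Step 2 (the indifference region — the crux).} Because $h$ satisfies \eqref{neym2} and $p_1=\lambda p_0$ entails $p_1\geq\lambda p_0$, we have $h\equiv 1$ on all of $B$. Since $h=h^*$ a.s.\ $\nu$ (hence a.s.\ $\mu_0$, as $\mu_0\ll\nu$) off $B$ and $R_{\mu_0}(h)=R_{\mu_0}(h^*)=\alpha$, subtracting the common contribution from $\{p_1\neq\lambda p_0\}$ gives $\mu_0(\{h=1\}\cap B)=\mu_0(\{h^*=1\}\cap B)$, i.e.\ $\mu_0(B)=\mu_0(\{h^*=1\}\cap B)$, so $\mu_0(\{h^*=0\}\cap B)=0$. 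On $B\cap\{p_0>0\}$ the measures $\mu_0$ and $\nu$ are mutually absolutely continuous, so $\nu(\{h^*=0\}\cap B\cap\{p_0>0\})=0$; on $B\cap\{p_0=0\}$ the hypothesis $h^*\equiv 1$ a.s.\ $\nu$ disposes of the rest. Hence $h^*=1$ a.s.\ $\nu$ on $B$, which together with Step 1 yields \eqref{neym2} for $h^*$ a.s.\ $\nu$.

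I expect the only genuinely nontrivial point to be Step 2: the classical argument gives no control on the indifference set, and one must notice that the two seemingly mild side conditions are precisely what pin $h^*$ down there — with the $\{p_0=0\}$ normalization indispensable because on that part of $B$ the Type-I constraint exerts no pressure whatsoever. Everything else (finiteness of the integrals, measurability) is routine since $h,h^*\in\{0,1\}$ and $p_0,p_1$ are densities.
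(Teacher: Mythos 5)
Your proof is correct and follows essentially the same route as the paper: agreement with $h$ off the indifference set $\{p_1=\lambda p_0\}$ via the Neyman--Pearson necessity argument (which you re-derive through $\int (h-h^*)(p_1-\lambda p_0)\,d\nu=0$ rather than simply invoking Theorem \ref{thm1}(ii)), and then, on the indifference set, using $R_{\mu_0}(h^*)=\alpha=R_{\mu_0}(h)$ to force $\mu_0$-a.s.\ agreement and the normalization $h^*\equiv h\equiv 1$ on $\{p_0=0\}$ to upgrade it to $\nu$-a.s.\ agreement. The paper phrases this last step as a contradiction on $S_3=\{x: h^*(x)\neq h(x),\ p_1(x)=\lambda p_0(x)\}$ whereas you split directly into $\{p_0>0\}$ and $\{p_0=0\}$, but the content is identical.
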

\begin{proof}
Let $S_1=\{x:h^*(x)=h(x)\},\ S_2=\{x:h^*(x)\neq h(x), p_1(x)\neq \lambda p_0(x)\},\ S_3= \{x:h^*(x)\neq h(x), p_1(x)=\lambda p_0(x)\}$. By Neyman-Pearson Lemma part (ii), we know that $\nu(S_2)=0$. Moreover, we have
\begin{align*}
    \alpha&=\int h^*p_0d\nu\\
    &=\int_{S_1} h^*p_0d\nu+\int_{S_2} h^*p_0d\nu+\int_{S_3} h^*p_0d\nu
\end{align*}
Since $\nu(S_2)=0$, we conclude that $\int_{S_2} h^*p_0d\nu=0$. Furthermore, on $S_3$ we have $h\equiv 1$ and $h\neq h^*$. So $h^*\equiv 0$ on $S_3$ and $\int_{S_3} h^*p_0d\nu=0$. Hence,
\begin{align*}
  \alpha=\int_{S_1} h^*p_0d\nu= \int_{S_1} hp_0d\nu
\end{align*}
On the other hand, we have 
\begin{align*}
    \alpha=\int hp_0d\nu&=\int_{S_1} hp_0d\nu+\int_{S_2} hp_0d\nu+\int_{S_3} hp_0d\nu\\
    &= \alpha+\int_{S_3} hp_0d\nu
\end{align*}
Therefore, $\int_{S_3} hp_0d\nu=\int_{S_3} p_0d\nu=0$, because $h\equiv 1$ on $S_3$. We claim that $\nu(S_3)=0$. By contradiction assume that $\nu(S_3)>0$. First we show that $p_0$ is positive on $S_3$ a.s. $\nu$. The reason is that on $S_3$ we have $h\equiv 1$ and $h\neq h^*$. In addition, for $x$ satisfying $p_0(x)=0$, we have $h(x)=h^*(x)=1$ a.s. $\nu$. Therefore, $p_0$ must be positive on $S_3$ a.s. $\nu$. However, we have $\int_{S_3} p_0d\nu=0$ which cannot be true since $\nu(S_3)>0$ and $p_0>0$ a.s. $\nu$ on $S_3$. Hence, we conclude that $\nu(S_3)=0$. Finally, we obtain that $\nu(S_2\cup S_3)=0$, where $S_2\cup S_3=\{x: h^*(x)\neq h(x)\}$.
\end{proof}
\begin{remark}\label{prop3.14_new}
Proposition \ref{pro_ney} implies that $\mathcal{L}^S(\alpha)$ in Definition \ref{alpha-level} is unique a.s. $\nu$.
\end{remark}
Now we are ready to prove Proposition \ref{lem6} in Section \ref{full_trans}, which characterizes equivalent source and target pairs. First we show the following technical lemma.
\begin{lemma}\label{techn}
Suppose that $\alpha$ is achievable. If $\alpha<1$, then (\ref{eq1}) cannot have any other solution $h$ with $R_{\mu_0}(h)<\alpha$ and $R_{\mu_{1,S}}(h)=0$.
\end{lemma}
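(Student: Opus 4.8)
The plan is to argue by contradiction, using the \emph{necessary} condition of the Neyman--Pearson Lemma (Theorem~\ref{thm1}(ii)), which is available precisely because $\alpha$ is achievable. Since $\alpha$ is achievable for the source, fix $\lambda$ with $\mu_0(\mathcal{L}_{\lambda}^{S})=\alpha$. I would first record that $\lambda>0$: density ratios are nonnegative (and equal $\infty$ on $\{p_0=0\}$ by convention), so $\mathcal{L}_{0}^{S}=\mathcal{X}$ and $\mu_0(\mathcal{L}_{0}^{S})=1>\alpha$, ruling out $\lambda=0$. Set $h_0:=\mathbb{1}_{\mathcal{L}_{\lambda}^{S}}$; then $h_0$ satisfies (\ref{neym1}) and (\ref{neym2}) for the source pair $(\mu_0,\mu_{1,S})$, so Theorem~\ref{thm1}(ii) applies to any solution of the source problem.

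Now suppose, for contradiction, that some $h$ is a solution of (\ref{eq1}) with $R_{\mu_0}(h)<\alpha$ and $R_{\mu_{1,S}}(h)=0$. Since $R_{\mu_{1,S}}(h)=0$ is the minimum possible value of the objective even over all measurable classifiers, $h$ is a solution of (\ref{eq0}) in the sense of Theorem~\ref{thm1}, so part (ii) forces, $\nu$-a.e., that $h=1$ on $\{p_{1,S}>\lambda p_0\}$ and $h=0$ on $\{p_{1,S}<\lambda p_0\}$. Writing $B:=\mathcal{X}\setminus\mathcal{L}_{\lambda}^{S}$, the convention gives $B=\{p_{1,S}<\lambda p_0\}$ (in particular $p_0>0$ on $B$), hence $h=0$ $\nu$-a.e. on $B$ and so $\mu_0(\{h=1\}\cap B)=0$. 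Consequently $R_{\mu_0}(h)=\mu_0(\{h=1\}\cap\mathcal{L}_{\lambda}^{S})\leq\mu_0(\mathcal{L}_{\lambda}^{S})=\alpha$, and since this inequality is strict we obtain $\mu_0(\{h=0\}\cap\mathcal{L}_{\lambda}^{S})>0$.

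Next I would localize where $h=0$ can happen inside $\mathcal{L}_{\lambda}^{S}$. Since $h=1$ $\nu$-a.e. on $\{p_{1,S}>\lambda p_0\}$, the set $\{h=0\}\cap\{p_{1,S}>\lambda p_0\}$ is $\nu$-null, hence $\mu_0$-null; and $\mathcal{L}_{\lambda}^{S}$ is disjoint from $B=\{p_{1,S}<\lambda p_0\}$. Therefore the positive-$\mu_0$-mass set $\{h=0\}\cap\mathcal{L}_{\lambda}^{S}$ reduces, up to $\mu_0$-null sets, to $C:=\{h=0\}\cap\{p_{1,S}=\lambda p_0\}$, so $\mu_0(C)>0$, i.e. $\int_C p_0\,d\nu>0$. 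Passing to $C':=C\cap\{p_0>0\}$ (same $\mu_0$-mass, as $\mu_0(\{p_0=0\})=0$), on $C'$ we have $p_{1,S}=\lambda p_0>0$ because $\lambda>0$, hence $\int_{C'}p_{1,S}\,d\nu=\lambda\int_{C'}p_0\,d\nu>0$, i.e. $\mu_{1,S}(C')>0$. But $h\equiv 0$ on $C'$, so $R_{\mu_{1,S}}(h)\geq\mu_{1,S}(C')>0$, contradicting $R_{\mu_{1,S}}(h)=0$. This contradiction proves the lemma.

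The main obstacle here is bookkeeping rather than depth: one must apply consistently the convention that $p_{1,S}/p_0=\infty$ on $\{p_0=0\}$ (so that $\{p_0=0\}\subseteq\mathcal{L}_{\lambda}^{S}$ and $B=\{p_{1,S}<\lambda p_0\}$), carefully distinguish $\nu$-a.e.\ statements from $\mu_0$- and $\mu_{1,S}$-a.e.\ ones, and justify the strict inequality $\lambda>0$ from $\alpha<1$, which is exactly what powers the final step $\mu_0(C')>0\Rightarrow\mu_{1,S}(C')>0$. Note that the hypothesis ``$\alpha$ achievable'' enters only through the witness $h_0$ needed to invoke Theorem~\ref{thm1}(ii); without it such a degenerate solution is genuinely possible, consistent with the disjoint-support discussion preceding this lemma.
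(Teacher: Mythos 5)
Your proof is correct and takes essentially the same route as the paper's: both argue by contradiction through the necessary condition of the Neyman--Pearson Lemma (Theorem~\ref{thm1}(ii)), use the strict inequality $R_{\mu_0}(h)<\alpha$ to locate positive $\mu_0$-mass of $\{h=0\}$ on the critical set $\{p_{1,S}=\lambda p_0\}$, and exploit $\alpha<1$ (equivalently $\lambda>0$) to conclude. The only difference is bookkeeping order: you establish $\lambda>0$ up front and contradict $R_{\mu_{1,S}}(h)=0$ directly, whereas the paper compares the putative solution with $\mathbb{1}_{\mathcal{L}^S(\alpha)}$ and derives the contradiction $\lambda=0$, hence $\mu_0(\mathcal{L}^S(\alpha))=1\neq\alpha$.
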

\begin{proof}[Proof of Lemma \ref{techn}]
By contradiction, suppose that there exists a solution $h_1$ of (\ref{eq1}) with $R_{\mu_0}(h_1)<\alpha$ and $R_{\mu_{1,S}}(h_1)=0$. Furthermore, let $h=\mathbb{1}_{\mathcal{L}^S(\alpha)}$. Since both $h$ and $h_1$ are solutions of (\ref{eq1}), we have $R_{\mu_{1,S}}(h)=R_{\mu_{1,S}}(h_1)=0$. By Neyman-Pearson Lemma part (ii), we know that $h_1= h$ on $\{x: p_0(x)\neq \lambda p_{1,S}(x)\}$ a.s. $\nu$. Let us define the sets $S_1=\{x:h_1(x)\neq h(x)\}$ and $S_2=\{x: p_0(x)=\lambda p_{1,S}(x)\}$. Then we have $S_1\subset S_2$ a.s. $\nu$. Since $h\equiv 1$ on $S_2$, we must have $h_1\equiv 0$ on $S_1$ a.s. $\nu$. From $R_{\mu_{1,S}}(h)=R_{\mu_{1,S}}(h_1)=0$ we conclude that $\mu_{1,S}(S_1)=0$ and from $R_{\mu_0}(h_1)<R_{\mu_0}(h)=\alpha$ we conclude that $\mu_0(S_1)>0$. Let $S'_1=\{x\in S_1: p_0(x)>0\}$. Hence $\mu_0(S'_1)>0$ and $\nu(S'_1)>0$. Then, let us define the set $A=\{x: x\in S'_1, p_{1,S}(x)=0\}$. We have 
\begin{align*}
    \mu_{1,S}(S'_1)=\int_{S'_1}p_{1,S}d\nu=\int_{S'_1\backslash A}p_{1,S}d\nu=0.
\end{align*}
Since $p_{1_S}>0$ on $S_1\backslash A$, we conclude that $\nu(S'_1\backslash A)=0$ which implies that $\nu(A)>0$. On $A$, $p_{1,S}\equiv 0$ and $p_0>0$ and $h\equiv 1$ a.s. $\nu$. Hence, we should have $\lambda=0$ which implies that $\mu_0(\mathcal{L}^S(\alpha))=1$. However, we assumed that $\alpha<1$.
\end{proof}
\begin{proof}[Proof of Proposition \ref{lem6}]\label{proof_suff}
(Sufficiency) Suppose that $\mathcal{L}^S(\alpha)\in \{\mathcal{L}_{\lambda}^{T}\}_{\lambda\geq 0}$ a.s. $\nu$. Due to Neyman-Pearson Lemma part (i), $\mathbb{1}_{\mathcal{L}^S(\alpha)}$ is a solution of (\ref{eq1}). Let $h^S\in\mathcal{U}^*$ be any arbitrary solution of (\ref{eq1}). First we consider the case that $R_{\mu_{1,S}}(h^S)>0$ (or the power of $h^S$ in the source problem is less than $1$). By Neyman-Pearson Lemma part (ii) we have $R_{\mu_0}(h^S)=\alpha$. Then by Proposition \ref{pro_ney}, $h^S=\mathbb{1}_{\mathcal{L}^S(\alpha)}$ a.s. $\nu$.
We claim that $h^S$ is a solution of (\ref{eq2}). Since $\mathcal{L}^S(\alpha)\in \{\mathcal{L}_{\lambda}^{T}\}_{\lambda\geq 0}$, there exists $\mathcal{L}^{T}_{\lambda'}$ such that $\mathcal{L}^S(\alpha)=\mathcal{L}^{T}_{\lambda'}$ a.s. $\nu$. Hence, $\mu_0(\mathcal{L}^{T}_{\lambda'})=\alpha$ and $\mathbb{1}_{\mathcal{L}^T_{\lambda'}}$ is a solution of (\ref{eq2}). Furthermore, $h^S=\mathbb{1}_{\mathcal{L}^S(\alpha)}=\mathbb{1}_{\mathcal{L}^T_{\lambda'}}$ a.s. $\nu$ which implies that $h^S$ is a solution of (\ref{eq2}).

If $R_{\mu_{1,S}}(h^S)=0$ and $R_{\mu_0}(h^S)=\alpha$, it would be similar to the previous case. Furthermore, by Lemma \ref{techn}, we cannot have a solution $h^S$ with $R_{\mu_{1,S}}(h^S)=0$ and $R_{\mu_0}(h^S)<\alpha$.


(Necessity) Suppose that any solution of (\ref{eq1}) is also a solution of (\ref{eq2}). Since $\mathbb{1}_{\mathcal{L}^S(\alpha)}$ is a solution of (\ref{eq1}), we conclude that it is also a solution of (\ref{eq2}). Since $R_{\mu_0}(\mathbb{1}_{\mathcal{L}^S(\alpha)})=\alpha$, by Proposition \ref{pro_ney} and $\alpha$ achievability, $\mathbb{1}_{\mathcal{L}^T(\alpha)}=\mathbb{1}_{\mathcal{L}^S(\alpha)}$ a.s. $\nu$. Therefore, $\mathcal{L}^{S}(\alpha)=\mathcal{L}^{T}(\alpha)$ a.s. $\nu$ and $\mathcal{L}^{S}(\alpha)\in \{\mathcal{L}_{\lambda}^{T}\}_{\lambda\geq 0}$ a.s. $\nu$.

\end{proof}
\subsection{Example corresponding to Fig \ref{fig2_example}}\label{App_3}
\begin{example}\label{ex_3.4}
  Let $\nu$ be the Lebesgue measure, $\alpha=\frac{1}{16}$, and $\mathcal{U}$ be the class of all the measurable $0$-$1$ functions on $\mathbb{R}$. Furthermore, let $\mu_{1,S}= Unif[1,2]$, $\mu_{1,T}= Unif[\frac{4}{3},\frac{8}{3}]$, and 
  \[p_{0}(x)= \begin{cases} 
      \frac{x}{4}+\frac{1}{2} & -2\leq x\leq 0 \\
      \frac{-x}{4}+\frac{1}{2} & 0< x\leq 2
   \end{cases}
\]Then, we have $\mathcal{L}^S(\alpha)=\mathcal{L}^T(\alpha)=(-\infty,-2]\cup [\frac{3}{2},+\infty)$. Consider the hypothesis $h=\mathbb{1}_{\{x\in[\frac{3}{2},2]\}}\in \mathcal{U}$ which is a solution in the source but not in the target. However, source is equivalent to target under $\mathcal{U}^*$.
\end{example}
\section{Appendix B (Outlier Transfer Exponent)}
\subsection{Proof of Proposition \ref{prop_example}}
\begin{proof}
    Let 
    \begin{align*}
        &\mu_0= \mathcal{N}(0,1)\\
        &\mu_{1,S}= Unif[0,1]\\
        &\mu_{1,T}= A_1\cdot Unif[t_1,2t_0-1]+A_2\cdot Unif[2t_0-1,1]
    \end{align*}
    where $A_2>A_1>0$, $t_1>0, \frac{1}{2}<t_0<1$, $2t_0-1>t_1$ and $A_1(2t_0-t_1-1)+A_2(2-2t_0)=1$. Moreover, $$\mathcal{H}=\{\mathbb{1}_{\{x\in [a,1]\cup [b,t_0]\}}(x): t_0\leq a\leq 1, t_1\leq b\leq t_0\}.$$ Let $\alpha=\mu_0([t_0,1])$ and $r<\mu_0([2t_0-1,t_0])-\alpha$. Clearly by Neyman-Pearson Lemma the unique source and target solutions are  $h^*_{S,\alpha}=h^*_{T,\alpha}=\mathbb{1}_{\{t_0\leq x\leq 1\}}$. Then for any $h$ with $R_{\mu_0}(h)\leq \alpha+r$, $h$ is of the form $h=\mathbb{1}_{\{x\in [a,1]\cup [b,t_0]\}}$ for some $a\in[t_0,1]$ and $b\in[2t_0-1,t_0]$. Hence,
    \begin{align*}
      &R_{\mu_{1,S}}(h)-R_{\mu_{1,S}}(h^*_{S,\alpha})=a+b-2t_0,\\
      &R_{\mu_{1,T}}(h)-R_{\mu_{1,T}}(h^*_{S,\alpha})=A_2(a+b-2t_0)
    \end{align*}
    which implies that $\rho(r)=1$. However, if we take $h=\mathbb{1}_{\{2t_0-1-\epsilon\leq x\leq t_0\}}(x)$ for small enough $0<\epsilon<\frac{(1-t_0)(A_2-A_1)}{A_1}$, which violates the condition $R_{\mu_0}(h)\leq \alpha+r$, (\ref{dist}) does not hold for any $\rho<\infty$.
\end{proof}

\section{Appendix C: Proof of Theorem \ref{minimax_rate} (Minimax Lower Bound)}\label{app_C}


\subsection{Minimax lower bound is larger than $\min\{(\frac{d_{\mathcal{H}}}{n_S})^{\frac{1}{2\rho}},(\frac{d_{\mathcal{H}}}{n_T})^{\frac{1}{2}}\}$ for $d_{\mathcal{H}}\geq 17$}\label{lower_bounsd_sec1} 
Let $d=d_{\mathcal{H}}-1$ and $d_{\mathcal{H}}$ be odd (If $d_{\mathcal{H}}$ is even then define $d=d_{\mathcal{H}}-2$). Then pick $d_{\mathcal{H}}$ points $\mathcal{S}=\{x_0,x_{1,1},...,x_{1,\frac{d}{2}}, x_{2,1},..., x_{2,\frac{d}{2}}\}$ from $\mathcal{X}$ shattered by $\mathcal{H}$ (if $d_{\mathcal{H}}$ is even then we pick $d_{\mathcal{H}}-1$ points). Moreover, let $\tilde{\mathcal{H}}$ be the projection of $\mathcal{H}$ onto the set $\mathcal{S}$ with the constraint that all $h\in \tilde{\mathcal{H}}$ classify $x_0$ and $x_{-1}$ as $0$.

Next we construct a distribution $\mu_0$ and a family of pairs of distributions $(\mu_{1,S}^{\sigma},\mu_{1,T}^{\sigma})$ indexed by $\sigma\in \{-1,+1\}^{\frac{d}{2}}$. In the following, we fix $\epsilon=c_1\cdot \text{min}\{(\frac{d_{\mathcal{H}}}{n_S})^{\frac{1}{2\rho}},(\frac{d_{\mathcal{H}}}{n_T})^{\frac{1}{2}}\}$ for a constant $c_1 < 1$ to be determined.

\textbf{Distribution} $\mu_0$: We define $\mu_0$ on $\mathcal{S}$ as follows:
\begin{align*}
    \mu_0(x_{1,i})=\mu_0(x_{2,i})=\frac{2\alpha}{d} \ \ \text{for} \ \ i=1,...,\frac{d}{2}
\end{align*}
and $\mu_0(x_0)=1-2\alpha$. 

\textbf{Distribution} $\mu_{1,T}^{\sigma}$: We define $\mu_{1,T}^{\sigma}$ on $\mathcal{S}$ as follows:
\begin{align*}
   &\mu_{1,T}^{\sigma}(x_{1,i})=\frac{1}{d}+(\sigma_i/2)\cdot\frac{\epsilon}{d}\ \ \text{for} \ \ i=1,...,\frac{d}{2}\\
   &\mu_{1,T}^{\sigma}(x_{2,i})=\frac{1}{d}-(\sigma_i/2)\cdot\frac{\epsilon}{d}\ \ \text{for} \ \ i=1,...,\frac{d}{2}
\end{align*}
and $\mu_{1,T}^{\sigma}(x_0)=0$.

\textbf{Distribution} $\mu_{1,S}^{\sigma}$: We define $\mu_{1,S}^{\sigma}$ on $\mathcal{S}$ as follows:
\begin{align*}
   &\mu_{1,S}^{\sigma}(x_{1,i})=\frac{1}{d}+(\sigma_i/2)\cdot\frac{\epsilon^{\rho}}{d}\ \ \text{for} \ \ i=1,...,\frac{d}{2}\\
   &\mu_{1,S}^{\sigma}(x_{2,i})=\frac{1}{d}-(\sigma_i/2)\cdot\frac{\epsilon^{\rho}}{d}\ \ \text{for} \ \ i=1,...,\frac{d}{2}
\end{align*}
and $\mu_{1,S}^{\sigma}(x_0)=0$.

\textbf{Verifying the transfer distance condition.} For any $\sigma\in\{-1,+1\}^{\frac{d}{2}}$, let $h_{\sigma}\in\tilde{\mathcal{H}}$ be the minimizer of $R_{\mu^{\sigma}_{1,S}}$ and $R_{\mu^{\sigma}_{1,T}}$ with Type-I error w.r.t. $\mu_0$ at most $\alpha$. Then $h_{\sigma}$ satisfies the following:
\begin{equation*}
h_{\sigma}(x_{1,i})=1-h_{\sigma}(x_{2,i})=
    \begin{cases}
        1 & \text{if } \sigma_i =1\\
        0 & \text{otherwise}
    \end{cases}
    \ \ \text{for }\ \ i=1,...,\frac{d}{2}
\end{equation*}
For any $\hat{h}\in \tilde{\mathcal{H}}$ with $\alpha-\frac{2\alpha}{d}<\mu_0(\hat{h})<\alpha+\frac{2\alpha}{d}$, we have
\begin{align*}
 &\mu_{1,T}(h_{\sigma}=1)-\mu_{1,T}(\hat{h}=1)=\frac{k}{d}\cdot \epsilon\\
 &\mu_{1,S}(h_{\sigma}=1)-\mu_{1,S}(\hat{h}=1)=\frac{k}{d}\cdot \epsilon^{\rho}
\end{align*}
for some non-negative integer $k\leq \frac{d}{2}$. So the outlier transfer exponent is $\rho$ with $C_{\rho}=1$. The condition is also satisfied for $\hat{h}\in \tilde{\mathcal{H}}$ with $\mu_0(\hat{h})\leq \alpha-\frac{2\alpha}{d}$. In this case we have
\begin{align*}
 &\mu_{1_T}(h_{\sigma}=1)-\mu_{1_T}(\hat{h}=1)=\frac{k_1}{d}+\frac{k_2\epsilon}{2d}\\
 &\mu_{1_S}(h_{\sigma}=1)-\mu_{1_S}(\hat{h}=1)=\frac{k_1}{d}+\frac{k_2\epsilon^{\rho}}{2d}
\end{align*}
for some integers $k_1\leq \frac{d}{2}$ and $k_2\leq d$. Using inequality $(a+b)^{\rho}\leq 2^{\rho-1}(a^{\rho}+b^{\rho})$ the condition can be easily verified.

\textbf{Reduction to a packing.} Any classifier $\hat{h}:\mathcal{S}\rightarrow \{0,1\}$ can be reduced to a binary sequence in the domain $\{-1,+1\}^{d}$. We can first map $\hat{h}$ to $(\hat{h}(x_{1,1}),\hat{h}(x_{1,2}),...,\hat{h}(x_{1,\frac{d}{2}}),\hat{h}(x_{2,1}),...,\hat{h}(x_{2,\frac{d}{2}}))$ and then convert any element $0$ to $-1$.
We choose the Hamming distance as the distance required in Theorem \ref{tsybakov}. By applying Proposition \ref{gilbert} we can get a subset $\Sigma$ of $\{-1,+1\}^{\frac{d}{2}}$ with $|\Sigma|=M\geq 2^{d/16}$ such that the hamming distance of any two $\sigma,\sigma'\in\Sigma$ is at least $d/16$. Any $\sigma,\sigma'\in \Sigma$ can be mapped to binary sequences in the domain $\{+1,-1\}^d$ by replicating and negating, i.e., $(\sigma,-\sigma),(\sigma',-\sigma')\in\{+1,-1\}^d$ and the hamming distance of resulting sequences in the domain $\{+1,-1\}^d$ is at least $d/8$. Then for any $\hat{h}\in \tilde{\mathcal{H}}$ with $\mu_0(\hat{h}=1)<  \alpha+\frac{2\alpha}{d}$ and $\sigma\in \Sigma$, if the hamming distance of the corresponding binary sequence of $\hat{h}$ and $\sigma$ in the domain $\{+1,-1\}^d$ is at least $d/8$ then we have
$$\mu_{1,T}(h_{\sigma}=1)-\mu_{1,T}(\hat{h}=1)\geq \frac{d}{8}\cdot\frac{\epsilon}{d}=\frac{\epsilon}{8}$$
In particular, for any $\sigma,\sigma'\in\Sigma$ we have 
$$\mu_{1,T}(h_{\sigma}=1)-\mu_{1,T}(h_{\sigma'}=1)\geq \frac{d}{8}\cdot\frac{\epsilon}{d}=\frac{\epsilon}{8}$$

\textbf{KL divergence bound.} Define $\Pi_{\sigma}=(\mu_{1,S}^{\sigma})^{n_S}\times (\mu_{1,T}^{\sigma})^{n_T}$. For any $\sigma,\sigma'\in \Sigma$, our aim is to bound the KL divergence of $\Pi_{\sigma}, \Pi_{\sigma'}$. We have 
\begin{align*}
    \mathcal{D}_{kl}(\Pi_{\sigma}|\Pi_{\sigma'})=n_S\cdot \mathcal{D}_{kl}(\mu_{1,S}^{\sigma}|\mu_{1,S}^{\sigma'})+n_T\cdot \mathcal{D}_{kl}(\mu_{1,T}^{\sigma}|\mu_{1,T}^{\sigma'})
\end{align*}

The distribution $\mu_{1,S}^{\sigma}$ can be expressed as $P^{\sigma}_{X}\times P^{\sigma}_{Y|X}$ where $P^{\sigma}_{X}$ is a uniform distribution over the set $\{1,2,...,\frac{d}{2}\}$ and $P^{\sigma}_{Y|X=i}$ is a Bernoulli distribution with parameter $\frac{1}{2}+\frac{1}{2}\cdot(\sigma_i/2)\cdot\epsilon^{\rho}$. Hence we get
\begin{align}\label{kl-b}
     \mathcal{D}_{kl}(\mu_{1,S}^{\sigma}|\mu_{1,S}^{\sigma'})&=\sum_{i=1}^{\frac{d}{2}}\frac{1}{d/2}\cdot \mathcal{D}_{kl}\bigg(\text{Ber}(\frac{1}{2}+\frac{1}{2}\cdot(\sigma_i/2)\cdot\epsilon^{\rho})|\text{Ber}(\frac{1}{2}+\frac{1}{2}\cdot(\sigma'_i/2)\cdot\epsilon^{\rho})\bigg)\nonumber\\
     &\leq c_0\cdot \frac{1}{4}\cdot\epsilon^{2\rho}\nonumber\\
     &\leq \frac{1}{4}c_0c_1^{2\rho}\cdot\frac{d_{\mathcal{H}}}{n_S}\nonumber\\
     &\leq c_0c_1^{2\rho}\cdot \frac{d}{n_S}
\end{align}
for some numerical constant $c_0$. Using the same argument we can obtain $\mathcal{D}_{kl}(\mu_{1,T}^{\sigma}|\mu_{1,T}^{\sigma'})\leq c_0c_1^2\cdot \frac{d}{n_T}$. Hence we get 
\begin{align*}
    \mathcal{D}_{kl}(\Pi_{\sigma}|\Pi_{\sigma'})\leq 2c_0c_1d.
\end{align*}

Then, for sufficiently small $c_1$ we get $\mathcal{D}_{kl}(\Pi_{\sigma}|\Pi_{\sigma'})\leq\frac{1}{8}\log{M}$ which satisfies condition (ii) in Proposition \ref{tsybakov}.

Therefore, for any learner that outputs a hypothesis $\hat{h}$ from $\{h\in\mathcal{H}:\mu_0(h)\leq \alpha+\frac{2\alpha}{d_{\mathcal{H}}}\}$ with probability $1-\delta_0$, there exist $(\mu_0,\mu_{1,S},\mu_{1,T})\in \mathcal{F}_{\mathcal{H}}(\rho,\alpha,1,\Delta)$ such that condition on the event $\hat{h}\in \{h\in\mathcal{H}:\mu_0(h)\leq \alpha+\frac{2\alpha}{d_{\mathcal{H}}}\}$ we have

\begin{align*}
\underset{S_{\mu_0},S_{\mu_{1,S}},S_{\mu_{1,T}}}{\mathbb{P}}\bigg(\mathcal{E}_{1,T}(\hat{h})
>c\cdot\min\{\Delta+(\frac{d_{\mathcal{H}}}{n_S})^{\frac{1}{2\rho}},(\frac{d_{\mathcal{H}}}{n_T})^{\frac{1}{2}}\}
\bigg)\geq c'
\end{align*}
which implies that the unconditional probability is as follows
\begin{align*}
\underset{S_{\mu_0},S_{\mu_{1,S}},S_{\mu_{1,T}}}{\mathbb{P}}\bigg(\mathcal{E}_{1,T}(\hat{h})
>c\cdot\min\{\Delta+(\frac{d_{\mathcal{H}}}{n_S})^{\frac{1}{2\rho}},(\frac{d_{\mathcal{H}}}{n_T})^{\frac{1}{2}}\}
\bigg)\geq (1-\delta_0)c'\geq c''
\end{align*}

\subsection{Minimax lower bound is larger than $\min\{(\frac{d_{\mathcal{H}}}{n_S})^{\frac{1}{2\rho}},(\frac{d_{\mathcal{H}}}{n_T})^{\frac{1}{2}}\}$ for $16\geq d_{\mathcal{H}}\geq 3$}\label{lower_bounsd_sec2}
Pick three points $\mathcal{S}=\{x_{0},x_{1},x_{2}\}$ from $\mathcal{X}$ shattered by $\mathcal{H}$. Then we construct a distribution $\mu_0$ and two pairs of distributions $(\mu_{1,S}^k,\mu_{1,T}^k)$ for $k=-1,1$. Also fix $\epsilon=c_1\cdot \min\{(\frac{1}{n_S})^{\frac{1}{2\rho}},(\frac{1}{n_T})^{\frac{1}{2}}\}$ for a constant $c_1<1$ to be determined.

\textbf{Distribution} $\mu_0$: We define $\mu_0$ on $\mathcal{S}$ as follows:
\begin{align*}
    \mu_0(x_0)=1-2\alpha,\ \   \mu_0(x_1)=\mu_0(x_2)=\alpha
\end{align*}

\textbf{Distribution} $\mu_{1,T}^{k}$: We define $\mu_{1,T}^{k}$ on $\mathcal{S}$ as follows:
\begin{align*}
   \mu_{1,T}^{k}(x_0)=0, \ \ \mu_{1,T}^{k}(x_1)=\frac{1}{2}+\frac{k}{2}\cdot \epsilon,\ \  \mu_{1,T}^{k}(x_2)=\frac{1}{2}-\frac{k}{2}\cdot \epsilon
\end{align*}

\textbf{Distribution} $\mu_{1,S}^{k}$: We define $\mu_{1,S}^{k}$ on $\mathcal{S}$ as follows:
\begin{align*}
   \mu_{1,S}^{k}(x_0)=0, \ \ \mu_{1,S}^{k}(x_1)=\frac{1}{2}+\frac{k}{2}\cdot \epsilon^{\rho},\ \  \mu_{1,S}^{k}(x_2)=\frac{1}{2}-\frac{k}{2}\cdot \epsilon^{\rho}
\end{align*}
Let $\Pi_{k}=(\mu_{1,S}^k)^{n_S}\times (\mu_{1,T}^k)^{n_T}$ for $k=-1,1$. Then using the same argument we get $\mathcal{D}_{kl}(\Pi_{-1}|\Pi_{1})\leq c$ where $c$ is a numerical constant. Furthermore, let $h_k$ be the solution with Type-I error at most $\alpha$ for the distributions $(\mu_0,\mu_{1,S}^k)$ and $(\mu_0,\mu_{1,T}^k)$. It is easy to see that $R_{\mu_{1,T}^k}(h_{-k})-R_{\mu_{1,T}^k}(h_{k})=\epsilon$. Using Le Cam's method we get that for any $\hat{h}$ chosen from $\mathcal{H}_{\alpha}=\{h\in \mathcal{H}:\mu_0(h)\leq \alpha+ \frac{2\alpha}{3}\}$ there exist $(\mu_0,\mu_{1,S},\mu_{1,T})\in \mathcal{F}_{\mathcal{H}}(\rho,\alpha,1,0)$ such that
\begin{align*}
\underset{S_{\mu_{1,S}},S_{\mu_{1,T}}}{\mathbb{P}}\bigg(\mathcal{E}_{1,T}(\hat{h})>c\cdot \min\{(\frac{1}{n_S})^{\frac{1}{2\rho}},(\frac{1}{n_T})^{\frac{1}{2}}\}
\bigg)\geq c'
\end{align*}

Since $d_{\mathcal{H}}\leq 16$ we conclude that 

\begin{align*}
\underset{S_{\mu_{1,S}},S_{\mu_{1,T}}}{\mathbb{P}}\bigg(\mathcal{E}_{1,T}(\hat{h})>c\cdot \min\{(\frac{d_{\mathcal{H}}}{n_S})^{\frac{1}{2\rho}},(\frac{d_{\mathcal{H}}}{n_T})^{\frac{1}{2}}\}
\bigg)\geq c'
\end{align*}
for some numerical constants $c,c'$.

\subsection{Minimax lower bound is larger than $\min\{\Delta,(\frac{d_{\mathcal{H}}}{n_T})^{\frac{1}{2}}\}$}\label{lower_bounsd_sec3} We only show it for the case where $d_{\mathcal{H}}\geq 17$. The other case follows the same idea as in Section \ref{lower_bounsd_sec2}.

We follow the same idea as in the previous part. Let $\epsilon=c_1\cdot\min\{\Delta,(\frac{d_{\mathcal{H}}}{n_T})^{\frac{1}{2}}\}$ and pick the same set $\mathcal{S}$ from $\mathcal{X}$ shattered by $\mathcal{H}$ construct the distributions on $\mathcal{S}$ as follows:

\textbf{Distribution} $\mu_0$: We define $\mu_0$ on $\mathcal{S}$ as follows:
\begin{align*}
    \mu_0(x_{1,i})=\mu_0(x_{2,i})=\frac{2\alpha}{d} \ \ \text{for} \ \ i=1,...,\frac{d}{2}
\end{align*}
and $\mu_0(x_0)=1-2\alpha$.

\textbf{Distribution} $\mu_{1,T}^{\sigma}$: We define $\mu_{1,T}^{\sigma}$ on $\mathcal{S}$ as follows:
\begin{align*}
   &\mu_{1,T}^{\sigma}(x_{1,i})=\frac{1}{d}+(\sigma_i/2)\cdot\frac{\epsilon}{d}\ \ \text{for} \ \ i=1,...,\frac{d}{2}\\
   &\mu_{1,T}^{\sigma}(x_{2,i})=\frac{1}{d}-(\sigma_i/2)\cdot\frac{\epsilon}{d}\ \ \text{for} \ \ i=1,...,\frac{d}{2}
\end{align*}
and $\mu_{1,T}^{\sigma}(x_0)=0$.

\textbf{Distribution} $\mu_{1,S}^{\sigma}$: We define $\mu_{1,S}^{\sigma}$ on $\mathcal{S}$ as follows:
\begin{align*}
   &\mu_{1,S}^{\sigma}(x_{1,i})=\frac{1}{d}+(1/2)\cdot\frac{\epsilon^{\rho}}{d}\ \ \text{for} \ \ i=1,...,\frac{d}{2}\\
   &\mu_{1,S}^{\sigma}(x_{2,i})=\frac{1}{d}-(1/2)\cdot\frac{\epsilon^{\rho}}{d}\ \ \text{for} \ \ i=1,...,\frac{d}{2}
\end{align*}
and $\mu_{1,S}^{\sigma}(x_0)=0$.

Note that unlike previous part, all the distributions $\mu_{1_S}^{\sigma}$ are the same for different $\sigma$'s. 

\textbf{Verifying $\mathcal{E}_{1,T}(h^*_{S,\alpha})\leq\Delta$}. For every pair of $(\mu_{1,S}^{\sigma},\mu_{1,T}^{\sigma})$ we have
\begin{align*}
\mathcal{E}_{1,T}(h^*_{S,\alpha})\leq \frac{d}{2}\cdot \frac{\epsilon}{d}\leq \Delta
\end{align*}

verifying the transfer distance condition and reducing to a packing parts follow the same idea. We just bound the corresponding kL-divergence.

\textbf{KL divergence bound.}
Define $\Pi_{\sigma}=(\mu_{1,S}^{\sigma})^{n_S}\times (\mu_{1,T}^{\sigma})^{n_T}$. 
We have 
\begin{align*}
    \mathcal{D}_{kl}(\Pi_{\sigma}|\Pi_{\sigma'})=n_S\cdot \mathcal{D}_{kl}(\mu_{1,S}^{\sigma}|\mu_{1_S}^{\sigma'})+n_T\cdot \mathcal{D}_{kl}(\mu_{1,T}^{\sigma}|\mu_{1,T}^{\sigma'})
\end{align*}
Since source distributions are the same, the first term is zero. Following the same argument we get 
\begin{align*}
     \mathcal{D}_{kl}(\mu_{1,T}^{\sigma}|\mu_{1,T}^{\sigma'})&\leq c_0\epsilon^2\leq c_0c_1\frac{d}{n_T}
\end{align*}
where $c_0$ is the same numerical constant used in (\ref{kl-b}). Then for sufficiently small $c_1$ we get $\mathcal{D}_{kl}(\Pi_{\sigma}|\Pi_{\sigma'})\leq\frac{1}{8}\log{M}$ which satisfies condition (ii) in Proposition \ref{tsybakov}.

\end{document}